\newcommand{\E}{\mathbb{E}}
\newcommand{\EE}[1]{\mathbb{E} \left[ {#1} \right]}
\newcommand{\eg}{e.g.~}
\newcommand{\ie}{i.e.~}
\newcommand{\etal}{et al.~}
\DeclareMathOperator{\argmax}{\arg\max}
\DeclareMathOperator{\argmin}{\arg\min}
\newtheorem{theorem}{Theorem}
\newtheorem{lemma}{Lemma}
\newtheorem{example}{Example}
\newtheorem{definition}{Definition}
\begin{document}
\title{The Perception-Distortion Tradeoff}

\author{Yochai~Blau
        and~Tomer~Michaeli%
\thanks{
Y. Blau and T. Michaeli are with the Technion -- Israel Institute of Technology, Haifa, Israel. E-mail: \{yochai@campus, tomer.m@ee\}.technion.ac.il \protect\\

\noindent This is an extended version of a paper published in the Proceedings of the 2018 IEEE Conference on Computer Vision and Pattern Recognition \cite{blau2018perception}. \protect\\
\url{https://ieeexplore.ieee.org/abstract/document/8578750} \protect\\
	
\noindent \scriptsize{\textcopyright \, 2020 IEEE. Personal use of this material is permitted. Permission from IEEE must be obtained for all other uses, in any current or future media, including reprinting/republishing this material for advertising or promotional purposes, creating new collective works, for resale or redistribution to servers or lists, or reuse of any copyrighted component of this work in other works.}}
}

\IEEEtitleabstractindextext{%
\begin{abstract}

Image restoration algorithms are typically evaluated by some distortion measure (\eg PSNR, SSIM, IFC, VIF) or by human opinion scores that quantify perceived perceptual quality. In this paper, we prove mathematically that distortion and perceptual quality are at odds with each other. Specifically, we study the optimal probability for correctly discriminating the outputs of an image restoration algorithm from real images. We show that as the mean distortion decreases, this probability must increase (indicating worse perceptual quality). As opposed to the common belief, this result holds true for any distortion measure, and is not only a problem of the PSNR or SSIM criteria. We also show that generative-adversarial-nets (GANs) provide a principled way to approach the perception-distortion bound. This constitutes theoretical support to their observed success in low-level vision tasks. Based on our analysis, we propose a new methodology for evaluating image restoration methods, and use it to perform an extensive comparison between recent super-resolution algorithms.

\end{abstract}
}

\maketitle
\IEEEraisesectionheading{\section{Introduction}\label{sec:introduction}}


\IEEEPARstart{T}{he} last decades have seen continuous progress in image restoration algorithms (\eg for denoising, deblurring, super-resolution) both in visual quality and in distortion measures like peak signal-to-noise ratio (PSNR) and structural similarity index (SSIM) \cite{wang2004image}. However, in recent years, it seems that the improvement in reconstruction accuracy is not always accompanied by an improvement in visual quality. In fact, and perhaps counter-intuitively, algorithms that are superior in terms of perceptual quality, are often inferior in terms of \eg PSNR and SSIM \cite{ledig2016photo,johnson2016perceptual,dahl2017pixel,sajjadi2017enhancenet,yeh2017semantic,yang2014single,shaham2019singan}. This phenomenon is commonly interpreted as a shortcoming of the existing distortion measures \cite{wang2009mean}, which fuels a constant search for alternative ``more perceptual'' criteria.

In this paper, we offer a complementary explanation for the apparent tradeoff between perceptual quality and distortion measures. Specifically, we prove that there exists a region in the perception-distortion plane, which cannot be attained regardless of the algorithmic scheme (see Fig.~\ref{fig:subObjPlane}). Furthermore, the boundary of this region is monotone. Therefore, in its proximity, it is only possible to improve \emph{either perceptual quality or distortion}, one at the expense of the other. The perception-distortion tradeoff exists for \emph{all distortion measures}, and is not only a problem of the mean-square error (MSE) or SSIM criteria. 

Let us clarify the difference between distortion and perceptual quality. The goal in image restoration is to estimate an image $x$ from its degraded version $y$ (\eg noisy, blurry, etc.). Distortion refers to the dissimilarity between the reconstructed image $\hat{x}$ and the original image $x$. Perceptual quality, on the other hand, refers only to the visual quality of $\hat{x}$, regardless of its similarity to $x$. Namely, it is the extent to which $\hat{x}$ looks like a valid natural image. An increasingly popular way of measuring perceptual quality is by using real-vs.\@-fake user studies, which examine the ability of human observers to tell whether $\hat{x}$ is real or the output of an algorithm~\cite{isola2016image,zhang2016colorful,salimans2016improved,denton2015deep,dahl2017pixel,iizuka2016let,zhang2017real,guadarrama2017pixcolor} (similarly to the idea underlying generative adversarial nets~\cite{goodfellow2014generative}). Therefore, perceptual quality can be defined as the best possible probability of success in such discrimination experiments, which as we show, is proportional to the distance between the distribution of reconstructed images and that of natural images.

\begin{figure}[]
	\begin{center}
		\includegraphics[width=0.87\linewidth]{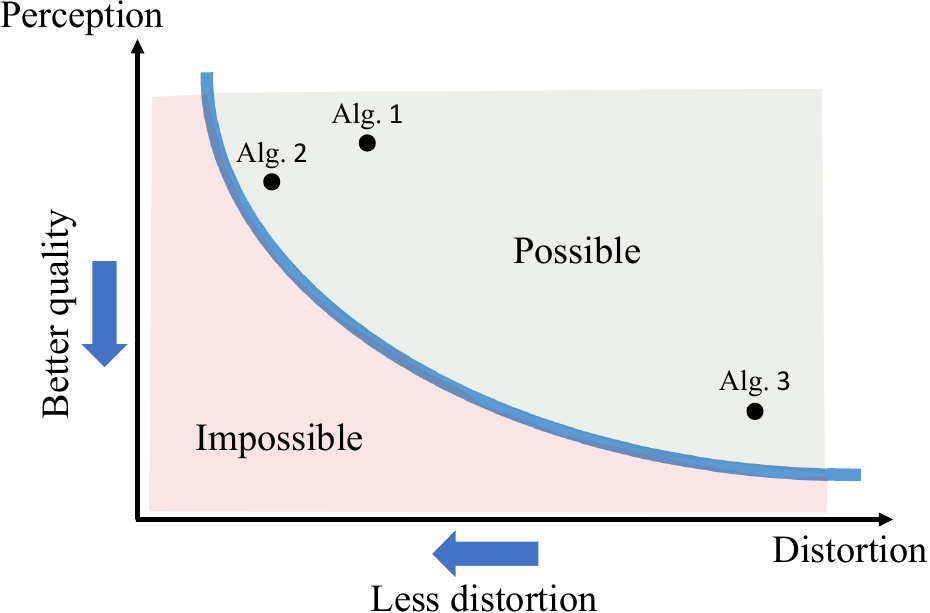}
	\end{center}
	\caption{\textbf{The perception-distortion tradeoff.} Image restoration algorithms can be characterized by their average distortion and by the perceptual quality of the images they produce. We show that there exists a region in the perception-distortion plane which cannot be attained, regardless of the algorithmic scheme. When in proximity of this unattainable region, an algorithm can be potentially improved only in terms of its distortion \emph{or} in terms of its perceptual quality, one at the expense of the other.}
	\label{fig:subObjPlane}
\end{figure}

Based on these definitions of perception and distortion, we follow the logic of rate-distortion theory~\cite{cover2012elements}. That is, we seek to characterize the behavior of the best attainable perceptual quality (minimal deviation from natural image statistics) as a function of the maximal allowable average distortion, for any estimator. This perception-distortion function (wide curve in Fig.~\ref{fig:subObjPlane}) separates between the attainable and unattainable regions in the perception-distortion plane and thus describes the fundamental tradeoff between perception and distortion. Our analysis shows that algorithms cannot be simultaneously very accurate \emph{and} produce images that fool observers to believe they are real, no matter what measure is used to quantify accuracy. This tradeoff implies that optimizing distortion measures can be not only ineffective, but also potentially damaging in terms of visual quality. This has been empirically observed \eg in \cite{ledig2016photo,johnson2016perceptual,sajjadi2017enhancenet,yeh2017semantic,dahl2017pixel}, but was never established theoretically.

From the standpoint of algorithm design, we show that generative adversarial nets (GANs) provide a principled way to approach the perception-distortion bound. This gives theoretical support to the growing empirical evidence of the advantages of GANs in image restoration \cite{ledig2016photo,sajjadi2017enhancenet,pathak2016context,yeh2017semantic,rippel2017real,isola2016image,zhu2017unpaired}.

The perception-distortion tradeoff has major implications on low-level vision. In certain applications, reconstruction accuracy is of key importance (\eg medical imaging). In others, perceptual quality may be preferred. The impossibility of simultaneously achieving both goals calls for a new way for evaluating algorithms: By placing them on the perception-distortion plane. We use this new methodology to conduct an extensive comparison between recent super-resolution (SR) methods, revealing which SR methods lie closest to the perception-distortion bound. 
\section{Distortion and perceptual quality}\label{sec:related}

Distortion and perceptual quality have been studied in many different contexts, and are sometimes referred to by different names. Let us briefly put past works in our context.

\subsection{Distortion (full-reference) measures}

Given a distorted image $\hat{x}$ and a ground-truth reference image $x$, full-reference distortion measures quantify the quality of $\hat{x}$ by its discrepancy to $x$. These measures are often called full reference image quality criteria because of the reasoning that if $\hat{x}$ is similar to $x$ and $x$ is of high quality, then $\hat{x}$ is also of high quality. However, as we show in this paper, this logic is not always correct. We thus prefer to call these measures distortion or dissimilarity criteria.

The most common distortion measure is the MSE, which is quite poorly correlated with semantic similarity between images \cite{wang2009mean}. Many alternative, more perceptual, distortion measures have been proposed over the years, including SSIM \cite{wang2004image}, MS-SSIM \cite{wang2003multiscale}, IFC \cite{sheikh2005information}, VIF \cite{sheikh2006image}, VSNR~\cite{chandler2007vsnr} and FSIM \cite{zhang2011fsim}. Recently, measures based on the $\ell_2$-distance between deep feature maps of a neural-net have been shown to capture more semantic similarities 
\cite{johnson2016perceptual,ledig2016photo,zhang2018unreasonable}.

\subsection{Perceptual quality} \label{sec:RelatedWorkPerceptualQuality}
The perceptual quality of an image $\hat{x}$ is the degree to which it looks like a natural image, and has nothing to do with its similarity to any reference image. In many image processing domains, perceptual quality has been associated with deviations from natural image statistics.

\subsubsection*{Human opinion based quality assessment}
Perceptual quality is commonly evaluated empirically by the mean opinion score of human subjects \cite{moorthy2011blind,mittal2012no}. Recently, it has become increasingly popular to perform such studies through real vs.~fake questionnaires \cite{isola2016image,zhang2016colorful,salimans2016improved,denton2015deep,dahl2017pixel,iizuka2016let,zhang2017real,guadarrama2017pixcolor}. These test the ability of a human observer to distinguish whether an image is real or the output of some algorithm. The probability of success $p_{\text{success}}$ of the optimal decision rule in this hypothesis testing task is known to be (see Appendix~\ref{ap:real-vs-fake} in the Supplementary Material)
\begin{equation}\label{eq:psuccess}
p_{\text{success}} = \tfrac{1}{2} d_{\text{TV}}(p_X,p_{\hat{X}}) + \tfrac{1}{2} ,
\end{equation}
where $d_{\text{TV}}(p_X,p_{\hat{X}})$ is the total-variation (TV) distance between the distribution $p_{\hat{X}}$ of images produced by the algorithm in question, and the distribution $p_X$ of natural images~\cite{nielsen2013hypothesis}. Note that $p_{\text{success}}$ decreases as the deviation between $p_{\hat{X}}$ and $p_X$ decreases, becoming $\tfrac{1}{2}$ (no better than a coin toss) when $p_{\hat{X}}=p_X$.

\subsubsection*{No-reference quality measures}
Perceptual quality can also be measured by an algorithm. In particular, no-reference measures quantify the perceptual quality $Q(\hat{x})$ of an image $\hat{x}$ \emph{without} depending on a reference image. These measures are commonly based on estimating deviations from natural image statistics. The works \cite{wang2005reduced,wang2006quality,li2009reduced} proposed perceptual quality indices based on the KL divergence between the distribution of the wavelet coefficients of $\hat{x}$ and that of natural scenes. This idea was further extended by the popular methods DIIVINE~\cite{moorthy2011blind}, BRISQUE~\cite{mittal2012no}, BLIINDS-II~\cite{saad2012blind} and NIQE~\cite{mittal2013making}, which quantify perceptual quality by various measures of deviation from natural image statistics in the spatial, wavelet and DCT domains.

\subsubsection*{GAN-based image restoration}
Most recently, GAN-based methods have demonstrated unprecedented perceptual quality in super-resolution \cite{ledig2016photo,sajjadi2017enhancenet,wang2018esrgan,shaham2019singan}, inpainting \cite{pathak2016context,yeh2017semantic,yu2018generative}, compression \cite{rippel2017real,agustsson2018generative,tschannen2018deep}, deblurring \cite{kupyn2018deblurgan} and image-to-image translation \cite{isola2016image,zhu2017unpaired,liu2017unsupervised}. This was accomplished by utilizing an adversarial loss, which minimizes some distance $d(p_X,p_{\hat{X}_{\text{GAN}}})$ between the distribution $p_{\hat{X}_{\text{GAN}}}$ of images produced by the generator and the distribution $p_X$ of images in the training dataset. A large variety of GAN schemes have been proposed, which minimize different distances between distributions. These include the Jensen-Shannon divergence \cite{goodfellow2014generative}, the Wasserstein distance \cite{arjovsky2017wasserstein}, and any $f$-divergence \cite{nowozin2016f}.

\subsubsection*{Single image quality vs. image ensemble quality}
A common measure of quality is the log-likelihood $Q_{\text{LL}}(\hat{x}) = \log (p_X(\hat{x}))$. However, this notion of quality evaluates each image individually, and therefore has shortcomings. As an example, consider a reconstruction algorithm that disregards the input image, and always outputs the same ``good-looking'' natural image that has a high likelihood. While this algorithm would rate very well by the average log-likelihood measure $\E[Q_{\text{LL}}(\hat{X})]$, it is obviously quite useless. An observer examining an \emph{ensemble} of outputs, would easily notice this flawed behavior. Therefore, in this paper we are more interested in ``ensemble quality''. The relation between the average log-likelihood and ``ensemble quality'' can be understood by noting that
\begin{equation}
\E_{\hat{X}\sim p_{\hat{X}}}[Q_{\text{LL}}(\hat{X})]=-d_{\text{KL}}(p_{\hat{X}},p_X) - H(p_{\hat{X}}).
\end{equation}
Here $d_{\text{KL}}$ is the Kullback-Leibler divergence and $H$ denotes entropy. The second term in this decomposition discourages diversity. 
Choosing to drop it, results in the distributional-divergence based quality measures described above.

\section{Problem formulation}\label{sec:perceptionDistortion}

In statistical terms, a natural image $x$ can be thought of as a realization from the distribution of natural images~$p_X$. In image restoration, we observe a degraded version $y$ relating to $x$ via some conditional distribution $p_{Y \vert X}$. In this paper we focus on non-invertible settings\footnote{\label{foot:invertible}By invertible we mean that the support of $p_{X|Y}(\cdot|y)$ is a singleton for almost all $y$'s (see Appendix \ref{ap:distortionProofNonunique} for a formal definition).}, where $x$ cannot be estimated from $y$ with zero error. This is typically the case in denoising, deblurring, inpaitning, super-resolution, etc. Given $y$, an image restoration algorithm produces an estimate $\hat{x}$ according to some distribution $p_{\hat{X}\vert Y}$. Note that this description is quite general in that it does not restrict the estimator $\hat{x}$ to be a deterministic function of $y$. This problem setting is illustrated in Fig.~\ref{fig:problemSetting}.

\begin{figure}
	\begin{center}
		\includegraphics[width=0.9\linewidth]{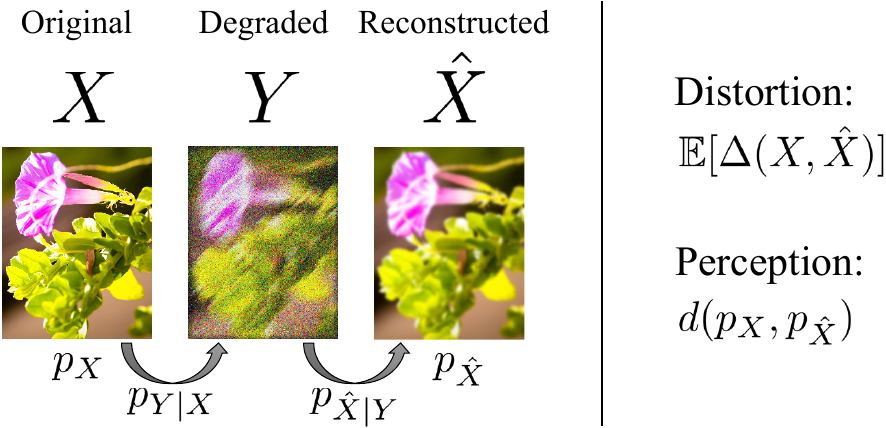}
	\end{center}
	\caption{\textbf{Problem setting.} Given an original image $x\sim p_X$, a degraded image $y$ is observed according to some conditional distribution $p_{Y \vert X}$. Given the degraded image $y$, an estimate $\hat{x}$ is constructed according to some conditional distribution $p_{\hat{X} \vert Y}$. Distortion is quantified by the mean of some distortion measure between $\hat{X}$ and $X$. The perceptual quality index corresponds to the deviation between $p_{\hat{X}}$ and $p_X$.}
	\label{fig:problemSetting}
\end{figure}

Given a full-reference dissimilarity criterion $\Delta(x,\hat{x})$, the average distortion of an estimator $\hat{X}$ is given by
\begin{equation}\label{eq:AverageDistortion}
\E[\Delta(X,\hat{X})],
\end{equation}
where the expectation is over the joint distribution $p_{X,\hat{X}}$. This definition aligns with the common practice of evaluating average performance over a database of degraded natural images. We assume that the dissimilarity criterion is such that $\Delta(x,\hat{x}) \ge 0$ with equality when $\hat{x} = x$. Note that some distortion measures, \eg SSIM, are actually \emph{similarity} measures (higher is better), yet can always be inverted (and shifted) to become dissimilarity measures.

As discussed in Sec.~\ref{sec:RelatedWorkPerceptualQuality}, the perceptual quality of an estimator $\hat{X}$ (as quantified \eg by real vs.~fake human opinion studies) is directly related to the distance between the distribution of its reconstructed images, $p_{\hat{X}}$, and the distribution of natural images, $p_X$. We thus define the perceptual quality index (lower is better) of an estimator $\hat{X}$ as
\begin{equation}\label{eq:PerceptualQualityIndex}
d(p_X,p_{\hat{X}}),
\end{equation}
where $d(p,q)$ is some divergence between distributions that satisfies $d(p,q) \ge 0$ with equality if $p=q$, \eg the KL divergence, TV distance, Wasserstein distance, etc. It should be pointed out that the divergence function $d(\cdot,\cdot)$ which best relates to human perception is a subject of ongoing research. Yet, our results below hold for (nearly) any divergence.

Notice that the best possible perceptual quality is obtained when the outputs of the algorithm follow the distribution of natural images (\ie $p_{\hat{X}}=p_X$). In this situation, by looking at the reconstructed images, it is impossible to tell that they were generated by an algorithm. However, not every estimator with this property is necessarily accurate. Indeed, we could achieve perfect perceptual quality by randomly drawing natural images that have nothing to do with the original ground-truth images. In this case the distortion would be quite large.

Our goal is to characterize the tradeoff between \eqref{eq:AverageDistortion} and~\eqref{eq:PerceptualQualityIndex}. But let us first see why minimizing the average distortion \eqref{eq:AverageDistortion}, does not necessarily lead to a low perceptual quality index \eqref{eq:PerceptualQualityIndex}. We start by illustrating this with the square-error distortion $\Delta(x,\hat{x})=\|x-\hat{x}\|^2$ and the $0-1$ distortion $\Delta(x,\hat{x})=1-\delta_{x,\hat{x}}$ (where $\delta$ is Kronecker's delta). We specifically illustrate that those measures are generally not distribution preserving in the following sense.
\begin{definition}
We say that a distortion measure $\Delta(\cdot,\cdot)$ is \emph{distribution preserving} at $p_{X,Y}$ if the estimator $\hat{X}$ minimizing the mean distortion \eqref{eq:AverageDistortion} satisfies $p_{\hat{X}}=p_X$.
\end{definition}
More details about those examples are provided in Appendix~\ref{ap:MMSE-MAP} (Supplementary Material). We then proceed to discuss this phenomenon for arbitrary distortions in Sec.~\ref{sec:arbitrary_dist}.

\subsection{The square-error distortion}\label{subsec:MMSEMAP}
The minimum mean square-error (MMSE) estimator is given by the posterior-mean $\hat{x}(y)=\E[X|Y=y]$. Consider the case $Y=X+N$, where $X$ is a discrete random variable with probability mass function
\begin{equation}\label{eq:XdiscreteExample}
	p_X(x) =
	\begin{cases}
	p_1 \quad \quad x= \pm 1,\\
	p_0 \quad \quad x=0,
	\end{cases}
\end{equation}
and $N \sim \mathcal{N}(0,1)$ is independent of $X$ (see Fig.~\ref{fig:exampleMAP}). In this setting, the MMSE estimate is given by
\begin{align}\label{eq:xMMSE}
\hat{x}_{\text{MMSE}}(y)&= \sum_{n\in\{-1,0,1\}}n \,\,p(X=n|y),
\end{align}
where
\begin{equation}\label{eq:PosteriorDiscrete}
p(X=n|y) = \frac{p_n \exp\{-\frac{1}{2}(y-n)^2\}}{\sum_{m\in\{-1,0,1\}}\limits p_m \exp\{-\frac{1}{2}(y-m)^2\}}.
\end{equation}
Notice that $\hat{x}_{\text{MMSE}}$ can take any value in the range $(-1,1)$, whereas $x$ can only take the discrete values $\{-1,0,1\}$. Thus, clearly, $p_{\hat{X}_{\text{MMSE}}}$ is very different from $p_X$, as illustrated in Fig.~\ref{fig:exampleMAP}. This demonstrates that minimizing the MSE distortion \emph{does not} generally lead to $p_{\hat{X}}\approx p_{X}$.

\begin{figure}
	\begin{center}
        \includegraphics[width=\linewidth]{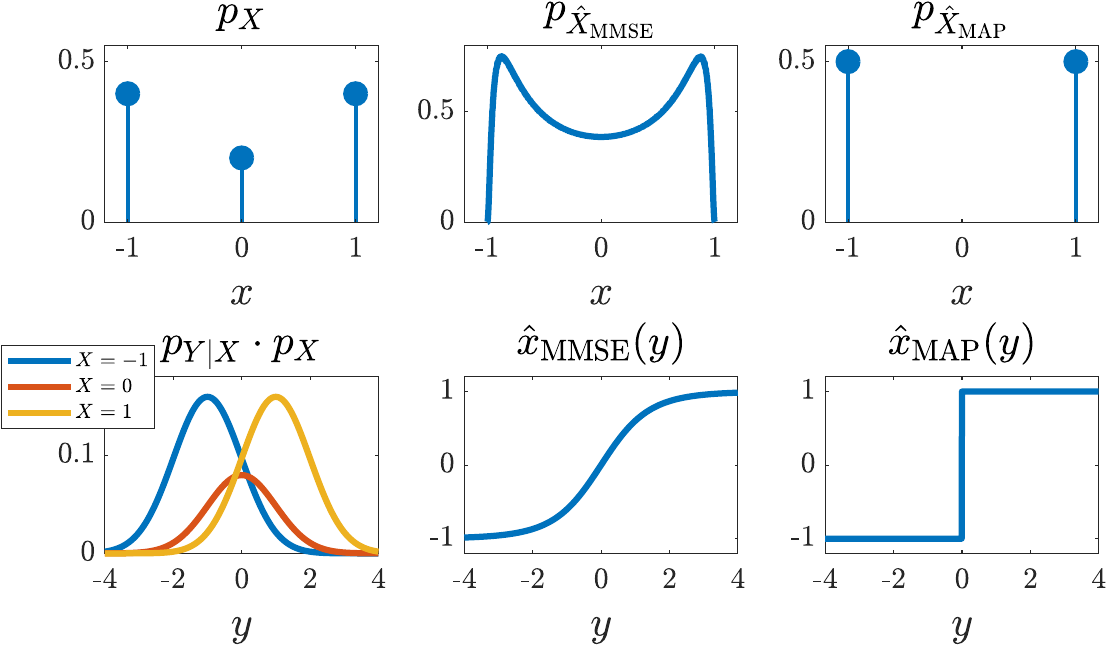}
	\end{center}
	\caption{\textbf{The distribution of the MMSE and MAP estimates.} In this example, $Y=X+N$, where $X \sim p_X$ and $N \sim \mathcal{N}(0,1)$. The distributions of both the MMSE and the MAP estimates deviate significantly from the distribution $p_X$.}
	\label{fig:exampleMAP}
\end{figure}

The same intuition holds for images. The MMSE estimate is an average over all possible explanations to the measured data, weighted by their likelihoods. However the average of valid images is not necessarily a valid image, so that the MMSE estimate frequently ``falls off'' the natural image manifold \cite{ledig2016photo}. This leads to unnatural blurry reconstructions, as illustrated in Fig.~\ref{fig:MMSE_MAP}. In this experiment, $x$ is a $280 \times 280$ image comprising $100$ smaller $28 \times 28$ digit images. Each digit is chosen uniformly at random from a dataset comprising $54$K images from the MNIST dataset \cite{MNIST} and an additional $5.4$K blank images. The degraded image $y$ is a noisy version of $x$. As can be seen, the MMSE estimator produces blurry reconstructions, which do not follow the statistics of the (binary) images in the dataset.

\begin{figure}
	\begin{center}
		\includegraphics[width=0.93\linewidth]{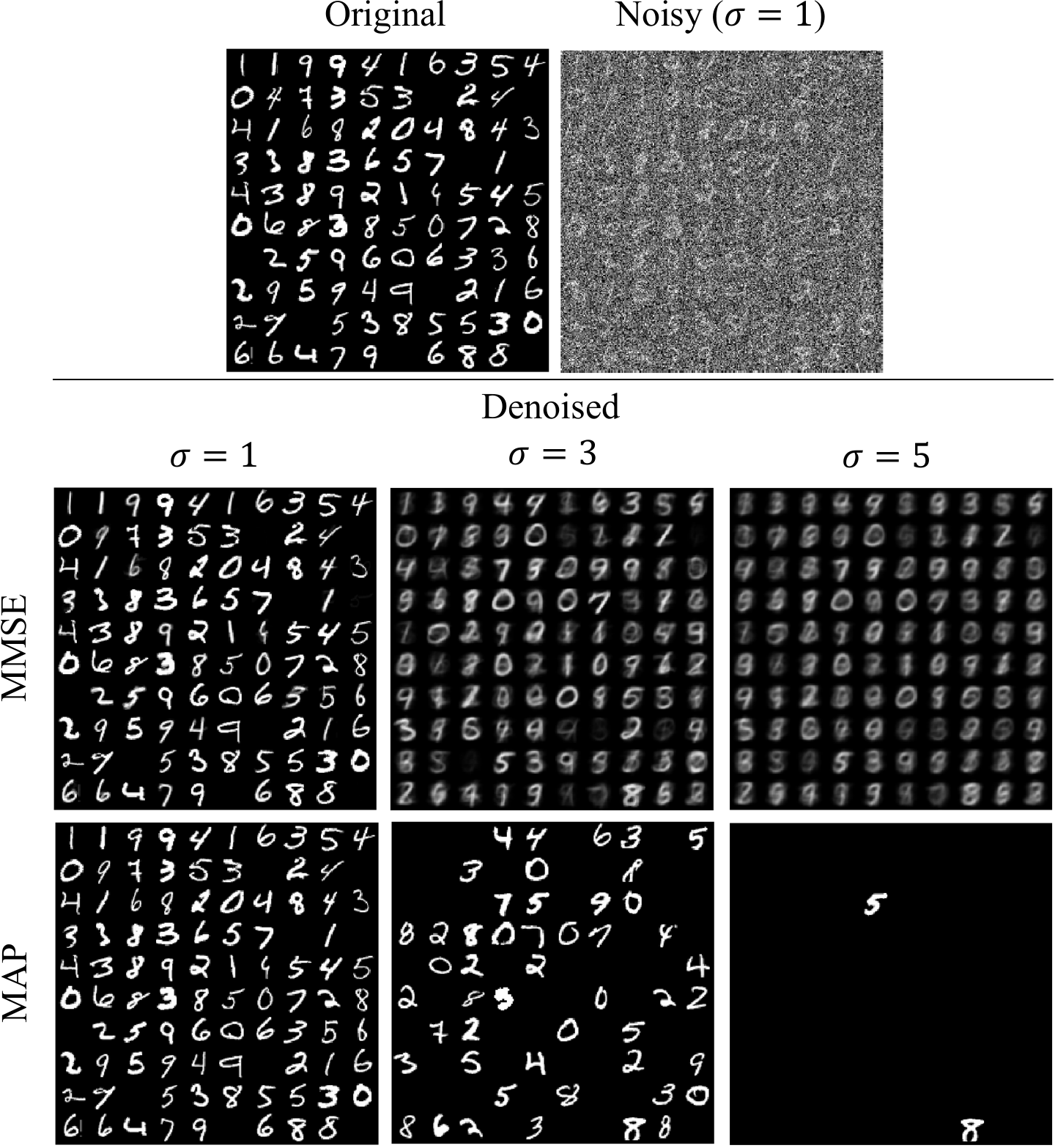}
	\end{center}
	\caption{\textbf{MMSE and MAP denoising.} Here, the original image consists of $100$ smaller images, chosen uniformly at random from the MNIST dataset enriched with blank images. After adding Gaussian noise ($\sigma=1,3,5$), the image is denoised using the MMSE and MAP estimators. In both cases, the estimates significantly deviate from the distribution of images in the dataset.}
	\label{fig:MMSE_MAP}
\end{figure}

\subsection{The $0-1$ distortion}\label{subsec:MMSEMAP2}
The discussion above may give the impression that unnatural estimates are mainly a problem of the square-error distortion, which causes averaging. One way to avoid averaging, is to minimize the binary $0-1$ loss, which restricts the estimator to choose $\hat{x}$ only from the set of values that $x$ can take. In fact, the minimum mean $0-1$ distortion is attained by the maximum-a-posteriori (MAP) rule, which is very popular in image restoration. However, as we exemplify next, the distribution of the MAP estimator also deviates from $p_X$. This behavior has also been studied in~\cite{nikolova2007model}.

Consider again the setting of \eqref{eq:XdiscreteExample}. In this case, the MAP estimate is given by
\begin{align}
\hat{x}_{\text{MAP}}(y)&= \argmax_{n\in\{-1,0,1\}}\limits p(X=n|y),
\end{align}
where $p(X=n|y)$ is as in \eqref{eq:PosteriorDiscrete}. Now, it can be easily verified that when $\log(p_1/p_0)>1/2$, we have $\hat{x}_{\text{MAP}}(y)=\text{sign}(y)$. Namely, the MAP estimator never predicts the value~$0$. Therefore, in this case, the distribution of the estimate is
\begin{equation}
	p_{\hat{X}_{\text{MAP}}}(\hat{x}) =
	\begin{cases}
	0.5 \quad \quad \hat{x}=+1,\\
	0.5 \quad \quad \hat{x}=-1,
	\end{cases}
\end{equation}
which is obviously different from $p_X$ of \eqref{eq:XdiscreteExample} (see Fig.~\ref{fig:exampleMAP}).

This effect can also be seen in the experiment of Fig.~\ref{fig:MMSE_MAP}. Here, the MAP estimates become increasingly dominated by blank images as the noise level rises, and thus clearly deviates from the underlying prior distribution.

\subsection{Arbitrary distortion measures}\label{sec:arbitrary_dist}
We saw that neither the square-error nor the $0-1$ loss are distribution preserving. That is, their minimization does not generally lead to $p_{\hat{X}}=p_X$ (\ie perfect perceptual quality). However these two examples do not yet preclude the existence of a distribution preserving distortion measure. Does there exist a measure whose minimization is guaranteed to lead to $p_{\hat{X}}=p_X$? If we limit ourselves to one single setting, then the answer may be positive. For example, in the setting of Fig.~\ref{fig:exampleMAP}, if $p_0$ of \eqref{eq:XdiscreteExample} equals~$0$, then the $0-1$ loss is distribution preserving as its minimization leads to an estimate satisfying $p_{\hat{X}}=p_X$. This illustrates that a distortion measure may be distribution preserving for certain underlying distributions $p_{X,Y}$ but not for others.

However, from a practical standpoint, we typically want our distortion measure to be adequate in more than one single setting. For example, if our goal is to train a neural network to perform denoising, then it is reasonable to expect that the same distortion measure be equally adequate as a loss function for different noise levels. In fact, we may also want to use the same distortion measure across different tasks (\eg super-resolution, deblurring, inpainting). The interesting question is, therefore, whether there exists a \emph{stably} distribution preserving distortion measure.
\begin{definition}
We say that a distortion measure $\Delta(\cdot,\cdot)$ is \emph{stably distribution preserving} at $p_{X,Y}$ if it is distribution preserving at all $\tilde{p}_{X,Y}$ in a TV $\varepsilon$-ball around $p_{X,Y}$ for some $\varepsilon>0$.
\end{definition}

As we show next, if the degradation is non-invertible, then no distortion metric can be stably distribution preserving (see proof in Appendix~\ref{ap:distortionProofNonunique}).
\begin{theorem}\label{thm:arbitraryDistortion}
	If $p_{X,Y}$ defines a non-invertible degradation, then $\Delta(\cdot,\cdot)$ is not a stably distribution preserving distortion at $p_{X,Y}$.
\end{theorem}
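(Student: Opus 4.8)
By the criterion in the footnote it suffices, for each $\alpha>0$, to produce a perturbation $\tilde{p}_{X,Y}$ for which the estimator minimizing \eqref{eq:AverageDistortion} under $p_\alpha:=(1-\alpha)p_{X,Y}+\alpha\tilde{p}_{X,Y}$ does not reproduce the $X$-marginal of $p_\alpha$. (We may assume $\Delta$ is distribution preserving at $p_{X,Y}$, since otherwise there is nothing to prove.) The guiding idea is that uniqueness forces the optimal estimator to be a \emph{deterministic} map $\hat{x}^*(\cdot)$ --- a non-degenerate set of Bayes-optimal reconstructions would yield several optimal estimators --- while distribution preservation forces the pushforward of $p_Y$ under this map to equal $p_X$; a point perturbation concentrated at a single observation value whose posterior is genuinely spread out will break this rigid constraint.

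The steps I would carry out: \textbf{(i)}~Record that $\hat{x}^*$ is deterministic, that $(\hat{x}^*)_{*}p_Y=p_X$, and that for $p_Y$-almost every $y$ the value $\hat{x}^*(y)$ is the \emph{unique} minimizer of $\E[\Delta(X,\hat{x})\mid Y=y]$ (otherwise $\hat{x}^*$ could be altered on a positive-measure set to give a second optimal estimator). \textbf{(ii)}~Invoke non-invertibility to obtain a set $B$ with $p_Y(B)>0$ on which the posterior $\mu_y:=p_{X\mid Y}(\cdot\mid y)$ is non-degenerate (its support is not a singleton); combined with (i), fix one $y_0\in B$ at which $\mu_0:=\mu_{y_0}$ is non-degenerate and has a \emph{unique} Bayes estimate $c:=\hat{x}^*(y_0)$. \textbf{(iii)}~Given $\alpha>0$, set $\tilde{p}_{X,Y}:=\delta_{y_0}\otimes\mu_0$ (deterministically $Y=y_0$, with $X$ drawn from $\mu_0$) and form $p_\alpha$. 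Since the added mass sits only at $Y=y_0$ and is inserted with \emph{exactly} the conditional law $\mu_0$, the posterior of $p_\alpha$ is unchanged for $y\neq y_0$ and still equals $\mu_0$ at $y_0$; hence the unique minimizer of \eqref{eq:AverageDistortion} under $p_\alpha$ coincides with $\hat{x}^*$ off $y_0$ and equals $c$ at $y_0$. \textbf{(iv)}~Push $p_{\alpha,Y}=(1-\alpha)p_Y+\alpha\delta_{y_0}$ through this map; straightforward bookkeeping with $(\hat{x}^*)_{*}p_Y=p_X$ (the cases $p_Y(\{y_0\})=0$ and $>0$ treated alike) gives $p_{\hat{X}_{\alpha}}=(1-\alpha)p_X+\alpha\delta_c$. \textbf{(v)}~The $X$-marginal of $p_\alpha$ is $(1-\alpha)p_X+\alpha\mu_0$; since $\mu_0$ is non-degenerate it differs from the point mass $\delta_c$, so $p_{\hat{X}_{\alpha}}$ equals neither this marginal nor $p_X$ itself (the latter because non-invertibility forbids $p_X$ from being a point mass). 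As $\alpha>0$ was arbitrary, $\Delta$ is not stably distribution preserving at $p_{X,Y}$.

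The step I expect to be the crux is ensuring, in (ii)--(iii), that the perturbed problem has a genuinely \emph{deterministic} optimum at the inserted point: when $y_0$ is not an atom of $p_Y$, the value $\hat{x}^*(y_0)$ is irrelevant to the original optimization, so one must actually use uniqueness of the global estimator (via (i)) to find a $y_0$ at which the posterior's Bayes estimate is unique. Without this, an optimal estimator of $p_\alpha$ could in principle be \emph{randomized} at the newly created atom $y_0$, spreading its output exactly as $\mu_0$ and thereby matching the perturbed marginal, so that the construction would collapse. The remaining ingredients --- the disintegration of $p_\alpha$ at the inserted atom, and the pushforward bookkeeping in the two cases $p_Y(\{y_0\})=0$ or $>0$ --- are routine.
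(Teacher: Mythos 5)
Your proof is correct under the plain reading of the hypotheses, and it takes a genuinely different route from the paper's. Both arguments hinge on the same key observation --- the optimal estimator is determined, separately for each $y$, by the posterior $p_{X|Y}$ alone, so a perturbation that alters only the marginal of $Y$ leaves the optimal estimator untouched --- but they deploy it differently. The paper first proves a lemma (Lemma~\ref{lem:estimatorIsPosterior}): by letting the marginal perturbation be arbitrary, uniqueness plus stable distribution preservation force $p_{\hat{X}^*|Y}=p_{X|Y}$, i.e.\ the estimator must be the posterior sampler; non-invertibility then implies that the whole set $\mathcal{S}_x$ on which the posterior is positive consists of minimizers of the conditional expected distortion, so the optimum cannot be unique --- a contradiction. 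You instead use uniqueness to force a deterministic estimator with ($p_Y$-a.e.) pointwise-unique Bayes estimates, and exhibit a \emph{single} explicit perturbation per $\alpha$: an atom at a $y_0$ whose posterior $\mu_0$ is non-degenerate, inserted with conditional law $\mu_0$ so that the posterior family is unchanged; the unchanged deterministic map then sends the new atom to one point $c$ while the $X$-marginal gains the spread-out $\mu_0$, and $(1-\alpha)p_X+\alpha\delta_c\neq(1-\alpha)p_X+\alpha\mu_0$ (and $\neq p_X$) finishes the argument. Your route is more constructive and bypasses the lemma; the paper's route additionally yields the structural fact that the only candidate distribution-preserving optimum is posterior sampling (the estimator reappearing in Theorem~\ref{thm:bound}), and it is phrased for the weaker formal notion of uniqueness in Definition~\ref{def:notUniqueEstimator}, under which non-uniqueness requires two optimal estimators differing on a product set $\mathcal{S}_x\times\mathcal{S}_y$ with $\mathcal{S}_x$ of positive Lebesgue measure. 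Under that formal definition your step (i) needs more care: such "uniqueness" does not exclude, say, a two-point set of Bayes-optimal reconstructions at each $y$, so determinism and the existence of your $y_0$ with a unique conditional minimizer would not follow automatically. Under the literal reading of ``the estimator minimizing \eqref{eq:AverageDistortion} is unique'' your argument is sound, and you correctly isolated the crux --- preventing a randomized optimum at the inserted atom from spreading its output as $\mu_0$.
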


\section{The perception-distortion tradeoff}\label{sec:tradeOff}
We saw that for any distortion measure, a low distortion does not generally imply good perceptual-quality. An interesting question, then, is: What is the best perceptual quality that can be attained by an estimator with a prescribed distortion level?
\begin{definition}
The perception-distortion function of a signal restoration task is given by
\begin{equation}\label{eq:fAlpha}
P(D) = \min_{p_{\hat{X} \vert Y}} \, d(p_X,p_{\hat{X}}) \quad \text{s.t.} \quad  \E[\Delta(X,\hat{X})] \le D,
\end{equation}
where $\Delta(\cdot,\cdot)$ is a distortion measure and $d(\cdot,\cdot)$ is a divergence between distributions.
\end{definition}
In words, $P(D)$ is the minimal deviation between the distributions $p_X$ and $p_{\hat{X}}$ that can be attained by an estimator with distortion $D$.
To gain intuition into the typical behavior of this function, consider the following example.
\begin{example}\label{ex:scalarGaussian}
Suppose that $Y=X+N$, where $X \sim \mathcal{N}(0,1)$ and $N \sim \mathcal{N}(0,\sigma_N^2)$ are independent. Take $\Delta(\cdot,\cdot)$ to be the square-error distortion and $d(\cdot,\cdot)$ to be the KL divergence. For simplicity, let us focus on estimators of the form $\hat{X}=aY$. In this case, we can derive a closed form solution to Eq.~\eqref{eq:fAlpha} (see Appendix \ref{ap:scalarGaussian}), which is plotted for several noise levels $\sigma_N$ in Fig.~\ref{fig:scalarGaussianExample}. As can be seen, the minimal attainable $d_{\text{KL}}(p_X,p_{\hat{X}})$ drops as the maximal allowable distortion (MSE) increases. Furthermore, the tradeoff is convex and becomes more severe at higher noise levels $\sigma_N$.
\end{example}

\begin{figure}
	\begin{center}
        \includegraphics[width=0.9\linewidth]{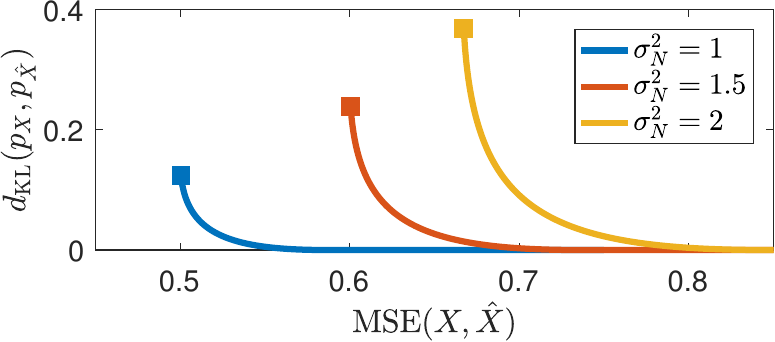}
	\end{center}
	\caption{\textbf{Plot of Eq.~\eqref{eq:fAlpha} for the setting of Example~\ref{ex:scalarGaussian}}. The minimal attainable KL distance between $p_X$ and $p_{\hat{X}}$ subject to a constraint on the maximal allowable MSE between $X$ and $\hat{X}$. Here, $Y=X+N$, where $X \sim \mathcal{N}(0,1)$ and $N \sim \mathcal{N}(0,\sigma_N)$, and the estimator is linear, $\hat{X}=aY$. Notice the clear trade-off: The perceptual index ($d_{\text{KL}}$) drops as the allowable distortion (MSE) increases. The graphs cut-off at the MMSE (marked by a square).}
	\label{fig:scalarGaussianExample}
\end{figure}

In general settings, it is impossible to solve~\eqref{eq:fAlpha} analytically. However, it turns out that the behavior seen in Fig.~\ref{fig:scalarGaussianExample} is typical, as we show next (see proof in Appendix \ref{ap:convexityProof}).

\begin{theorem}[The perception-distortion tradeoff]\label{lem:convexity}
Assume the problem setting of Section~\ref{sec:perceptionDistortion}. If $d(p,q)$ of \eqref{eq:PerceptualQualityIndex} is convex in its second argument\footnote{That is, $d(p,\lambda q_1 + (1-\lambda) q_2) \le \lambda d(p,q_1) + (1-\lambda) d(p,q_2)$ for any three distributions $p,q_1,q_2$ and any $\lambda \in [0,1]$.},
then the perception-distortion function $P(D)$ of \eqref{eq:fAlpha} is
\vspace{-0.15cm}
\begin{enumerate}
	\item monotonically non-increasing;
	\item convex.
\end{enumerate}
\end{theorem}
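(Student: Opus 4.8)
The plan is to establish both properties directly from the definition of $P(D)$ in \eqref{eq:fAlpha}, using the fact that the feasible set of estimators $p_{\hat{X}\vert Y}$ is convex and that the objective $d(p_X, p_{\hat{X}})$ is convex in $p_{\hat{X}}$, combined with the crucial observation that the map $p_{\hat{X}\vert Y} \mapsto p_{\hat{X}}$ is affine (linear) because $p_{\hat{X}}(\hat x) = \int p_{\hat{X}\vert Y}(\hat x \vert y)\, p_Y(y)\, dy$, and similarly $\E[\Delta(X,\hat X)]$ is linear in $p_{\hat{X}\vert Y}$ (it is $\int\!\!\int\!\!\int \Delta(x,\hat x)\, p_{\hat X\vert Y}(\hat x\vert y)\, p_{X,Y}(x,y)\, d\hat x\, dy\, dx$). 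Monotonicity (property 1) is the easy part: if $D_1 \le D_2$, then any estimator feasible for the constraint $\E[\Delta] \le D_1$ is also feasible for $\E[\Delta] \le D_2$, so the minimization for $P(D_2)$ is over a superset, giving $P(D_2) \le P(D_1)$. I would dispatch this in one sentence.

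For convexity (property 2), fix $D_1, D_2$ and $\lambda \in [0,1]$, and set $D_\lambda = \lambda D_1 + (1-\lambda) D_2$. Let $p_{\hat X\vert Y}^{(1)}$ and $p_{\hat X\vert Y}^{(2)}$ be estimators (approximately) achieving $P(D_1)$ and $P(D_2)$ respectively, with induced marginals $p_{\hat X_1}, p_{\hat X_2}$. Form the mixed estimator $p_{\hat X\vert Y}^{(\lambda)} = \lambda\, p_{\hat X\vert Y}^{(1)} + (1-\lambda)\, p_{\hat X\vert Y}^{(2)}$, which is a valid conditional distribution. By linearity of the expected distortion in the estimator, $\E[\Delta(X,\hat X^{(\lambda)})] = \lambda\, \E[\Delta(X,\hat X^{(1)})] + (1-\lambda)\, \E[\Delta(X,\hat X^{(2)})] \le \lambda D_1 + (1-\lambda) D_2 = D_\lambda$, so the mixed estimator is feasible for $P(D_\lambda)$. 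By linearity of the marginalization map, its induced marginal is $p_{\hat X^{(\lambda)}} = \lambda\, p_{\hat X_1} + (1-\lambda)\, p_{\hat X_2}$. Now invoke convexity of $d(p_X, \cdot)$ in its second argument: $d(p_X, p_{\hat X^{(\lambda)}}) \le \lambda\, d(p_X, p_{\hat X_1}) + (1-\lambda)\, d(p_X, p_{\hat X_2}) = \lambda P(D_1) + (1-\lambda) P(D_2)$. Since $P(D_\lambda)$ is the minimum over all feasible estimators, $P(D_\lambda) \le d(p_X, p_{\hat X^{(\lambda)}}) \le \lambda P(D_1) + (1-\lambda) P(D_2)$, which is exactly convexity.

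The main subtlety — not really an obstacle, but the point requiring care — is the interplay between the two linearities: the expected distortion and the output marginal must \emph{both} behave affinely under mixing of the \emph{conditional} estimators $p_{\hat X\vert Y}$, and it is essential that we mix at the level of $p_{\hat X\vert Y}$ rather than naively mixing the output marginals $p_{\hat X}$ (mixing marginals alone would not obviously come with a feasible estimator realizing the mixed distortion bound). A secondary technical point is whether the minimum in \eqref{eq:fAlpha} is attained; if not, one runs the same argument with estimators achieving $P(D_i) + \epsilon$ and lets $\epsilon \to 0$, which changes nothing. I would state the affineness of both functionals explicitly as the first step, then the two bullet points follow in a few lines each.
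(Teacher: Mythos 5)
Your proof is correct and takes essentially the same route as the paper's: both arguments mix the (near-)optimal conditional estimators $p_{\hat X\vert Y}^{(1)},p_{\hat X\vert Y}^{(2)}$, exploit that the expected distortion and the output marginal are affine in $p_{\hat X\vert Y}$ (the paper derives the distortion linearity via the law of total expectation and conditional independence of $X$ and $\hat X$ given $Y$), and then apply convexity of $d(p_X,\cdot)$, with your feasibility argument merely reordering the paper's chain of inequalities, which passes through monotonicity. Your $\epsilon$-approximation remark for a possibly unattained minimum is a minor technical refinement the paper does not spell out.
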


Note that Theorem~\ref{lem:convexity} requires no assumptions on the distortion measure $\Delta(\cdot,\cdot)$. This implies that a tradeoff between perceptual quality and distortion exists for \emph{any distortion measure}, including \eg MSE, SSIM, square error between VGG features \cite{johnson2016perceptual,ledig2016photo}, etc. Yet, this does not imply that all distortion measures have the same perception-distortion function. Indeed, as we demonstrate in Sec.~\ref{sec:practicalMethod}, the tradeoff tends to be less severe for distortion measures that capture semantic similarities between images.

The convexity of $P(D)$ implies that the tradeoff is more severe at the low-distortion and at the high-perceptual-quality extremes. This is particularly important when considering the TV divergence which is associated with the ability to distinguish between real vs.~fake images (see Sec.~\ref{sec:RelatedWorkPerceptualQuality}). Since $P(D)$ is steeper at the low-distortion regime, any \emph{small} improvement in distortion for an algorithm whose distortion is already low, must be accompanied by a \emph{large} degradation in the ability to fool a discriminator. Similarly, any \emph{small} improvement in the perceptual quality of an algorithm whose perceptual index is already low, must be accompanied by a \emph{large} increase in distortion. Let us comment that the assumption that $d(p,q)$ is convex, is not very limiting. For instance, any $f$-divergence (\eg KL, TV, Hellinger, $\mathcal{X}^2$) as well as the Renyi divergence, satisfy this assumption \cite{csiszar2004information,van2014renyi}. In any case, the function $P(D)$ is monotonically non-increasing even without this assumption.

\subsection{Bounding the Perception-Distortion function}\label{sec:bounding}

Several past works attempted to answer the question: What is the minimal attainable distortion $D_{\min}$ in various restoration tasks? \cite{levin2011natural,levin2012patch,chatterjee2010denoising,chatterjee2011practical,baker2002limits}. This corresponds to the value 
\begin{equation}\label{eq:Dmin}
D_{\min} =  \min_{p_{\hat{X}|Y}} \E[\Delta(X,\hat{X})],
\end{equation}
which is the horizontal coordinate of the leftmost point on the perception-distortion function. However, as the minimum distortion estimator is generally not distribution preserving (Sec.~\ref{sec:arbitrary_dist}), an important complementary question is: What is the minimal distortion that can be attained by an estimator \emph{having perfect perceptual quality}? This corresponds to the value
\begin{equation}\label{eq:Dmax}
D_{\max} =  \min_{p_{\hat{X}|Y}} \E[\Delta(X,\hat{X})] \quad \text{s.t.} \quad p_{\hat{X}} = p_X,
\end{equation}
which is the horizontal coordinate of the point where the perception-distortion function first touches the horizontal axis (see Fig.~\ref{fig:minMaxDist}).

Observe that perfect perceptual quality ($p_{\hat{X}} = p_X$) is always attainable, for example by drawing $\hat{x}$ from $p_X$ independently of the input $y$. 
This method, however, ignores the input and is thus not good in terms of distortion. 
It turns out that perfect perceptual quality can generally be achieved with a significantly lower MSE distortion, as we show next  (see proof in Appendix \ref{ap:boundProof}).

\begin{figure}
	\begin{center}
		\includegraphics[width=0.85\linewidth]{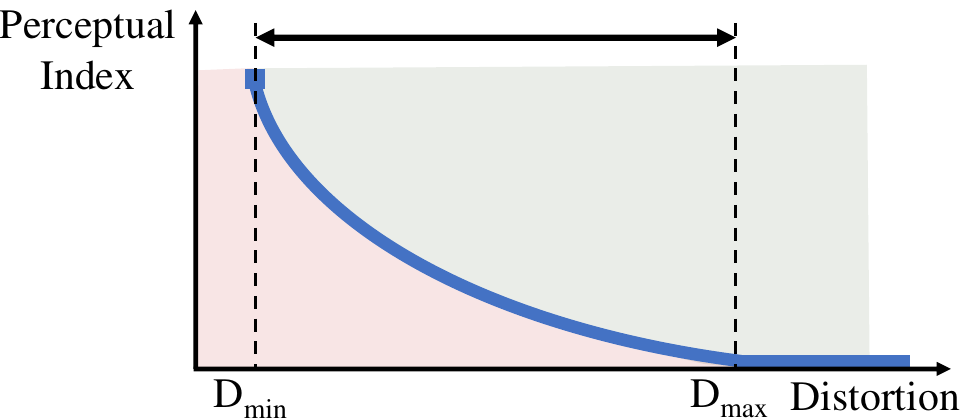}
	\end{center}
	\caption{\textbf{Bounding the perception-distortion function.} The distance between $D_{\min}$ and $D_{\max}$ is the increase in distortion which is needed to obtain perfect perceptual quality. For the MSE, Theorem \ref{thm:bound} proves this will never be more than a factor of $2$ (which is $3$dB in terms of PSNR).}
	\label{fig:minMaxDist}
\end{figure}

\begin{theorem}\label{thm:bound}
	For the square error distortion $\Delta(x,\hat{x}) = \|\hat{x}-x\|^2$,
	\begin{equation}
	D_{\max}\leq 2D_{\min},
	\end{equation}	
	where $D_{\min}$ and $D_{\max}$ are defined by \eqref{eq:Dmin} and \eqref{eq:Dmax}, respectively. This bound is attained by the estimator $\hat{X}$ defined through
\begin{equation}\label{eq:xpost}
p_{\hat{X}|Y}(x|y) = p_{X|Y}(x|y),
\end{equation}
which achieves $p_{\hat{X}}=p_X$ and has an MSE of $2D_{\min}$.
\end{theorem}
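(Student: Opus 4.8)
The plan is to exhibit the posterior-sampling estimator of \eqref{eq:xpost} explicitly, verify it is feasible for the optimization problem \eqref{eq:Dmax} defining $D_{\max}$, compute its MSE in closed form, and read off the inequality $D_{\max}\le 2D_{\min}$. Throughout I would assume $\E[\|X\|^2]<\infty$, so that the posterior mean is well defined and all conditional second moments below are finite.

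First I would show that $\hat X$ drawn according to $p_{\hat X|Y}(\cdot|y)=p_{X|Y}(\cdot|y)$ has $p_{\hat X}=p_X$, hence perfect perceptual quality: marginalizing over $y$ gives $p_{\hat X}(x)=\int p_{\hat X|Y}(x|y)\,p_Y(y)\,dy=\int p_{X|Y}(x|y)\,p_Y(y)\,dy=p_X(x)$. So this estimator is admissible in \eqref{eq:Dmax}. Next I would compute its distortion. The key structural observation is that, conditioned on $Y=y$, the pair $(X,\hat X)$ consists of two independent draws from $p_{X|Y}(\cdot|y)$ — independent because $\hat X$ is generated from $Y$ alone while carrying the same conditional law as $X$. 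Writing $\mu(y)=\E[X|Y=y]$, the bias–variance decomposition gives $\E[\|X-\hat X\|^2\mid Y=y]=\E[\|X-\mu(y)\|^2\mid Y=y]+\E[\|\hat X-\mu(y)\|^2\mid Y=y]$, since the cross term $\E[(X-\mu(y))^\top(\hat X-\mu(y))\mid Y=y]$ vanishes by conditional independence together with the fact that both factors are centered. Each of the two remaining terms equals $\operatorname{tr}(\operatorname{Cov}(X|Y=y))$, so $\E[\|X-\hat X\|^2\mid Y=y]=2\operatorname{tr}(\operatorname{Cov}(X|Y=y))$. Taking the expectation over $Y$ and using the standard identity $D_{\min}=\E[\|X-\E[X|Y]\|^2]=\E[\operatorname{tr}(\operatorname{Cov}(X|Y))]$ for the MMSE yields $\E[\|X-\hat X\|^2]=2D_{\min}$.

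Combining the two steps, $\hat X$ is feasible for \eqref{eq:Dmax} with distortion exactly $2D_{\min}$, so $D_{\max}\le 2D_{\min}$, and the bound is attained by this estimator, as claimed. I do not anticipate a genuine obstacle; the only points requiring care are the tacit finite-second-moment assumption and a clean justification that the cross term vanishes — which is immediate from the tower rule once one conditions on $Y$ and invokes the conditional independence of $X$ and $\hat X$. One may also note that it is not essential that $D_{\min}$ be attained: even interpreting $D_{\min}$ as an infimum, the identity $D_{\min}=\E[\operatorname{tr}(\operatorname{Cov}(X|Y))]$ still holds, so the argument goes through verbatim.
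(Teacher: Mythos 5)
Your proposal is correct and follows essentially the same route as the paper: verify that posterior sampling $p_{\hat X|Y}=p_{X|Y}$ is feasible for \eqref{eq:Dmax} since $p_{\hat X}=p_X$, then use the conditional independence and identical conditional law of $X$ and $\hat X$ given $Y$ to show its MSE equals $2D_{\min}$. The only difference is cosmetic: you organize the computation as a conditional bias--variance decomposition around $\E[X|Y=y]$, whereas the paper expands $\E[\|X-\hat X\|^2]$ globally and invokes the orthogonality principle, but both rest on exactly the same two facts and yield the same identity.
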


In simple words, Theorem \ref{thm:bound} states that one would never need to sacrifice more than $3$dB in PSNR to obtain perfect perceptual quality. This can be achieved by drawing $\hat{x}$ from the posterior distribution $p_{X|Y}$. Interestingly, such a degradation was indeed incurred by all super-resolution methods that achieved state-of-the-art perceptual quality to date. This can be seen in Fig.~\ref{fig:noRefMethods1}, where the RMSE of the algorithms with the lowest perceptual index is nearly a factor of $\sqrt{2}$ larger than the RMSE of the methods with the lowest RMSE (see also \cite{ledig2016photo,sajjadi2017enhancenet}). However, note that this bound is generally not tight. For example, in the scalar Gaussian toy example of Fig.~\ref{fig:scalarGaussianExample}, $D_{\max}$ can be quite smaller than $2D_{\min}$, depending on the noise level.

\subsection{Connection to rate-distortion theory}\label{sec:rateDistortion}
The perception-distortion tradeoff is closely related to the well-established rate-distortion theory \cite{cover2012elements}. This theory characterizes the tradeoff between the bit-rate required to communicate a signal, and the distortion incurred in the signal's reconstruction at the receiver. More formally, the rate-distortion function of a signal $X$ is defined by
\begin{equation}\label{eq:rateDistortion}
R(D) = \min_{p_{\hat{X} \vert X}} \, I(X;\hat{X}) \quad \text{s.t.} \quad  \E[\Delta(X,\hat{X})] \le D,
\end{equation}
where $I(\!X;\hat{X}\!)$ is the mutual information between $X$ and $\hat{X}$.

There are, however, several key differences between the two tradeoffs. First, in rate-distortion the optimization is over all conditional distributions $p_{\hat{X} \vert X}$, \ie given the \emph{original} signal. In the perception-distortion case, the estimator has access only to the degraded signal $Y$, so that the optimization is over the conditional distributions $p_{\hat{X} \vert Y}$, which is more restrictive. In other words, the perception-distortion tradeoff depends on the degradation $p_{Y|X}$, and not only on the signal's distribution $p_X$ (see Example~\ref{ex:scalarGaussian}). Second, in rate-distortion the rate is quantified by the mutual information $I(X;\hat{X})$, which depends on the joint distribution $p_{X,\hat{X}}$. In our case, perception is quantified by the similarity between $p_X$ and $p_{\hat{X}}$, which does not depend on their joint distribution. Lastly, mutual information is inherently convex, while the convexity of the perception-distortion curve is guaranteed only when $d(\cdot,\cdot)$ is convex.

While the two tradeoffs are different, it is important to note that perceptual quality does play a role in lossy compression, as evident from the success of recent GAN based compression schemes \cite{tschannen2018deep, agustsson2018generative, santurkar2018generative}. Theoretically, its effect can be studied through the rate-distortion-perception function \cite{blau2019rethinking,matsumoto2018introducing,matsumoto2018rate}, which is an extension of the rate-distortion function \eqref{eq:rateDistortion} and the perception-distortion function \eqref{eq:fAlpha}, characterizing the triple tradeoff between rate, distortion, and perceptual quality. 

\section{Traversing the tradeoff with a GAN}\label{sec:WGAN}
There exists a systematic way to design estimators that approach the perception-distortion curve: Using GANs. Specifically, motivated by \cite{ledig2016photo,pathak2016context,yeh2017semantic,sajjadi2017enhancenet,rippel2017real,isola2016image}, restoration problems can be approached by modifying the loss of the generator of a GAN to be
\begin{equation}\label{eq:GANloss}
\ell_\text{gen} = \ell_{\text{distortion}} + \lambda \, \ell_{\text{adv}},
\end{equation}
where $\ell_{\text{distortion}}$ is the distortion between the original and reconstructed images, and $\ell_{\text{adv}}$ is the standard GAN adversarial loss. It is well known that $\ell_{\text{adv}}$ is proportional to some divergence $d(p_X,p_{\hat{X}})$ between the generator and data distributions \cite{goodfellow2014generative,arjovsky2017wasserstein,nowozin2016f} (the type of divergence depends on the loss). Thus,~\eqref{eq:GANloss} in fact approximates the objective
\begin{equation}
\ell_\text{gen} \approx \E[\Delta(x,\hat{x})] + \lambda \, d(p_X,p_{\hat{X}}).
\end{equation}
Viewing $\lambda$ as a Lagrange multiplier, it is clear that minimizing $\ell_\text{gen}$ is equivalent to minimizing \eqref{eq:fAlpha} for some $D$. Varying $\lambda$ corresponds to varying $D$, thus producing estimators along the perception-distortion function.

Let us use this approach to explore the perception-distortion tradeoff for the digit denoising example of Fig.~\ref{fig:MMSE_MAP} with $\sigma=3$. We train a Wasserstein GAN (WGAN) based denoiser \cite{arjovsky2017wasserstein,gulrajani2017improved} with an MSE distortion loss $\ell_{\text{distortion}}$. Here, $\ell_{\text{adv}}$ is proportional to the Wasserstein distance $d_W(p_X,p_{\hat{X}})$ between the generator and data distributions. The WGAN has the valuable property that its discriminator (critic) loss is an accurate estimate (up to a constant factor) of $d_W(p_X,p_{\hat{X}})$ \cite{arjovsky2017wasserstein}. This allows us to easily compute the perceptual quality index of the trained denoiser. We obtain a set of estimators with several values of $\lambda\in[0,0.3]$. For each denoiser, we evaluate the perceptual quality by the final discriminator loss. As seen in Fig.~\ref{fig:WGAN}, the curve connecting the estimators on the perception-distortion plane is monotonically decreasing. Moreover, it is associated with estimates that gradually transition from blurry and accurate to sharp and inaccurate. This curve obviously does not coincide with the analytic bound \eqref{eq:fAlpha} (illustrated by a dashed line). However, it seems to be adjacent to it. This is indicated by the fact that the left-most point of the WGAN curve is very close to the left-most point of the theoretical bound, which corresponds to the MMSE estimator. See Appendix \ref{ap:WGANdetails} for the WGAN training details and architecture.

\begin{figure}
	\begin{center}
		\includegraphics[width=\linewidth]{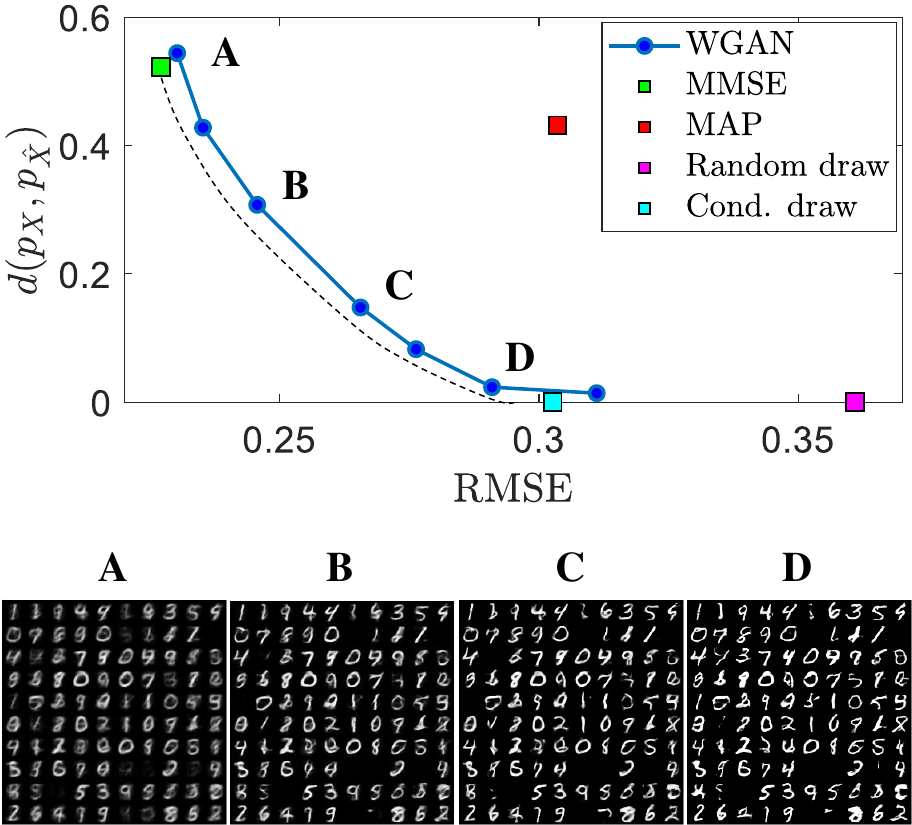}
	\end{center}
	\caption{\textbf{Image denoising utilizing a GAN.} A Wasserstein GAN was trained to denoise the images of the experiment in Fig.~\ref{fig:MMSE_MAP}. The generator loss $l_\text{gen} = l_{\text{MSE}} + \lambda \, l_{\text{adv}}$ consists of a perceptual quality (adversarial) loss and a distortion (MSE) loss, where $\lambda$ controls the trade-off between the two. For each $\lambda \in [0,0.3]$, the graph depicts the distortion (MSE) and perceptual quality (Wasserstein distance between $p_X$ and $p_{\hat{X}}$). The curve connecting the estimators is a good approximation to the theoretical perception-distortion tradeoff (illustrated by a dashed line).}
	\label{fig:WGAN}
\end{figure}

Besides the MMSE estimator, Figure \ref{fig:WGAN} also includes the MAP estimator, the random draw estimator $\hat{x}\sim p_X$ (which ignores the noisy image $y$), and the conditional draw estimator of \eqref{eq:xpost}. The perceptual quality of these estimators is evaluated, as above, by the final loss of the WGAN discriminator \cite{arjovsky2017wasserstein}, trained (without a generator) to distinguish between the estimators' outputs and images from the dataset.
Note that the denoising WGAN estimator (D) achieves the same distortion as the MAP estimator, but with far better perceptual quality. Furthermore, it achieves nearly the same perceptual quality as the random draw estimator, but with a significantly lower distortion.

\section{Practical method for evaluating algorithms}\label{sec:practicalMethod}

Certain applications may require low-distortion (\eg in medical imaging), while others may prefer superior perceptual quality. How should image restoration algorithms be evaluated, then?
\begin{definition}
We say that Algorithm A \emph{dominates} Algorithm B if it has better perceptual quality \emph{and} less distortion.
\end{definition}
Note that if Algorithm A is better than B in only one of the two criteria, then neither $A$ dominates $B$ nor $B$ dominates $A$. Therefore, among a group of algorithms, there may be a large subset which can be considered equally good.
\begin{definition}
We say that an algorithm is \emph{admissible} among a group of algorithms, if it is not dominated by any other algorithm in the group.
\end{definition}
As shown in Figure~\ref{fig:dominantAdmissable}, these definitions have very simple interpretations when plotting algorithms on the perception-distortion plane. In particular, the admissible algorithms in the group, are those which lie closest to the perception-distortion bound.

\begin{figure}
	\begin{center}
		\includegraphics[width=0.85\linewidth]{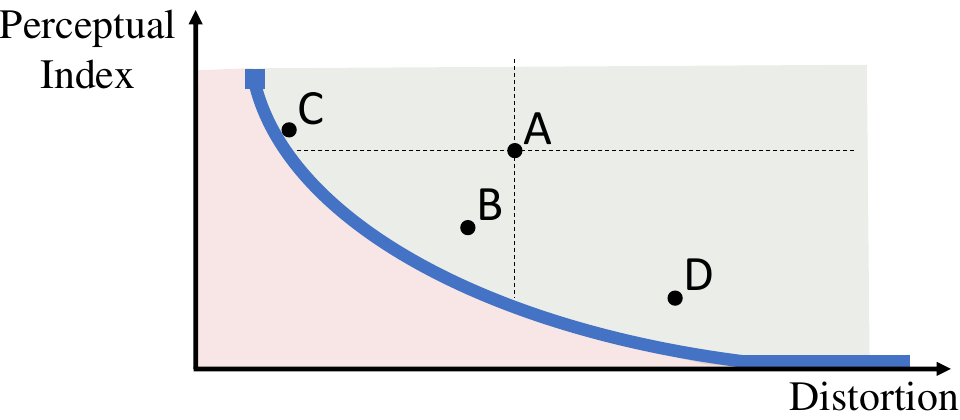}
	\end{center}
	\caption{\textbf{Dominance and admissibility.} Algorithm A is dominated by Algorithm B, and is thus inadmissible. Algorithms B, C and D are all admissible, as they are not dominated by any algorithm.}
	\label{fig:dominantAdmissable}
\end{figure}

\begin{figure*}
	\begin{center}
		\includegraphics[width=\linewidth]{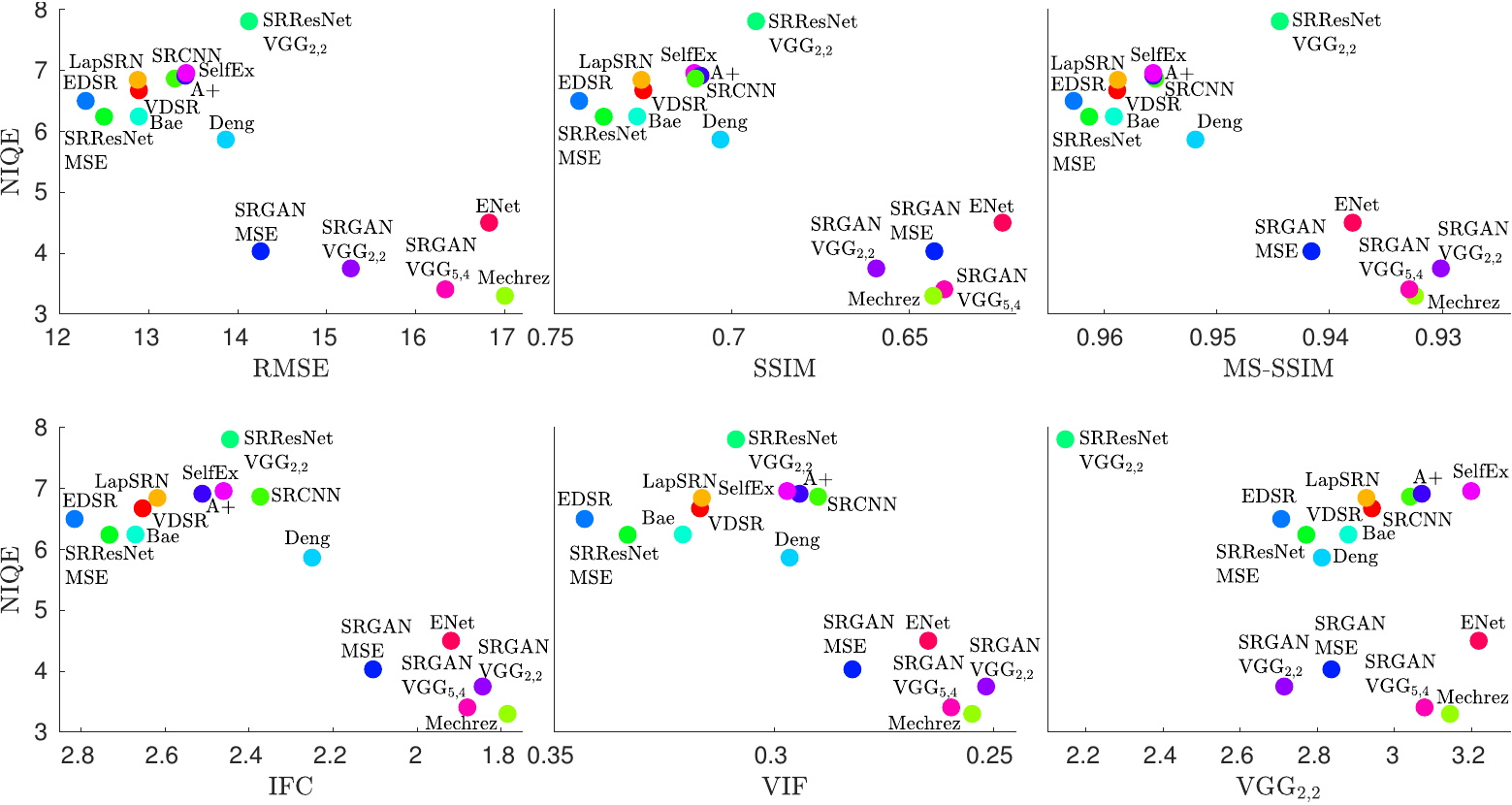}
	\end{center}
	\caption{\textbf{Perception-distortion evaluation of SR algorithms.} We plot $16$ algorithms on the perception-distortion plane. Perception is measured by the NR metric NIQE \cite{mittal2013making}. Distortion is measured by the common full-reference metrics RMSE, SSIM, MS-SSIM, IFC, VIF and VGG$_{2,2}$. In all plots, the lower left corner is blank, revealing an unattainable region in the perception-distortion plane. In proximity of the unattainable region, an improvement in perceptual quality comes at the expense of higher distortion.}
	\label{fig:noRefMethods1}
\end{figure*}

As discussed in Sec.~\ref{sec:related}, distortion is measured by \emph{full}-reference (FR) metrics, \eg \cite{wang2004image,wang2003multiscale,sheikh2005information,sheikh2006image,chandler2007vsnr,zhang2011fsim,johnson2016perceptual}. The choice of the FR metric, depends on the type of similarities we want to measure (per-pixel, semantic, etc.). Perceptual quality, on the other hand, is ideally quantified by collecting human opinion scores, which is time consuming and costly \cite{moorthy2011blind,saad2012blind}. Instead, the divergence $d(p_X,p_{\hat{X}})$ can be computed, for instance by training a discriminator net (see Sec.~\ref{sec:WGAN}). However, this requires \emph{many} training images and is thus also time consuming. A practical alternative is to utilize \emph{no}-reference (NR) metrics, \eg  \cite{mittal2012no,mittal2013making,saad2012blind,moorthy2011blind,ye2012unsupervised,kang2014convolutional,ma2017learning}, which quantify the perceptual quality of an image \emph{without} a corresponding original image. In scenarios where NR metrics are highly correlated with human mean-opinion-scores (\eg $4\times$ super-resolution \cite{ma2017learning}), they can be used as a fast and simple method for approximating the perceptual quality of an algorithm\footnote{In scenarios where NR metrics are inaccurate (\eg blind deblurring with large blurs \cite{lai2016comparative,liu2013no}), the perceptual metric should be human-opinion-scores or the loss of a discriminator trained to distinguish the algorithms' outputs from natural images.}.

\begin{figure*}
	\begin{center}
		\includegraphics[width=\linewidth,trim={0 9cm 0 8.4cm},clip]{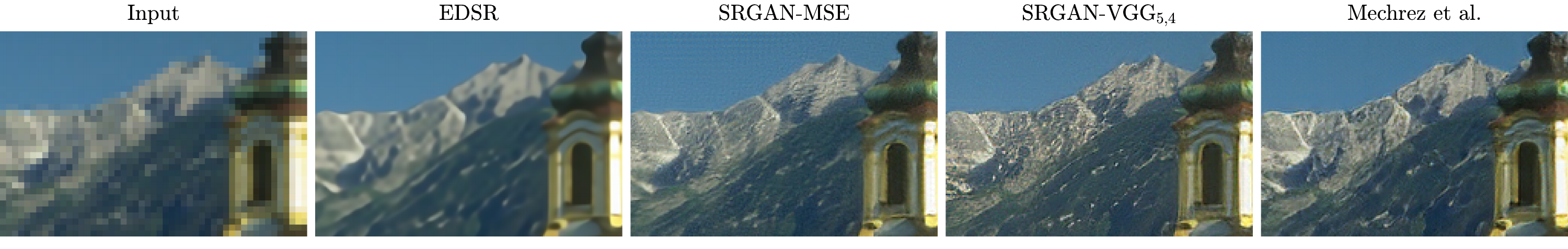}
	\end{center}
	\caption{\textbf{Visual comparison of algorithms closest to the perception-distortion bound.} The algorithms are ordered from low to high distortion (as evaluated by RMSE, MS-SSIM, IFC, VIF). Notice the co-occurring increase in perceptual quality.}
	\label{fig:comparison1}
\end{figure*}

We use this approach to evaluate $16$ SR algorithms in a $4\times$ magnification task, by plotting them on the perception-distortion plane (Fig.~\ref{fig:noRefMethods1}). We measure perceptual quality using the NR metric NIQE \cite{mittal2013making}, which was shown to correlate well with human opinion scores in a recent SR challenge \cite{blau2018pirm} (see Appendix \ref{sec:SR_details} for experiments with the NR metrics  BRISQUE \cite{mittal2012no}, BLIINDS-II \cite{saad2012blind} and the recent NR metric by Ma \etal \cite{ma2017learning}). We measure distortion by the five common FR metrics RMSE, SSIM \cite{wang2004image}, MS-SSIM \cite{wang2003multiscale}, IFC \cite{sheikh2005information} and VIF \cite{sheikh2006image}, and additionally by the recent $\text{VGG}_{2,2}$ metric (the distance in the feature space of a VGG net) \cite{ledig2016photo,johnson2016perceptual}. To conform to previous evaluations, we compute all metrics on the y-channel after discarding a 4-pixel border (except for VGG$_{2,2}$, which is computed on RGB images). Comparisons on color images can be found in Appendix \ref{sec:SR_details}. The algorithms are evaluated on the BSD100 dataset \cite{martin2001database}. The evaluated algorithms include: A+ \cite{timofte2014a+}, SRCNN \cite{dong2014learning}, SelfEx \cite{huang2015single}, VDSR~\cite{kim2016accurate}, Johnson \etal~\cite{johnson2016perceptual}, LapSRN \cite{lai2017deep}, Bae \etal~\cite{bae2017beyond} (``primary'' variant), EDSR \cite{lim2017enhanced}, SRResNet variants which optimize MSE and $\text{VGG}_{2,2}$ \cite{ledig2016photo}, SRGAN variants which optimize MSE, $\text{VGG}_{2,2}$, and $\text{VGG}_{5,4}$, in addition to an adversarial loss \cite{ledig2016photo}, ENet \cite{sajjadi2017enhancenet} (``PAT'' variant), Deng \cite{deng2018enhancing} ($\gamma=0.55$), and Mechrez \etal~\cite{Mechrez2018SR}.

Interestingly, the same pattern is observed in all plots:
(i) The lower left corner is blank, revealing an unattainable region in the perception-distortion plane. (ii) In proximity of this blank region, NR and FR metrics are \emph{anti-correlated}, indicating a tradeoff between perception and distortion. Notice that the tradeoff exists even for the IFC, VIF and VGG$_{2,2}$ measures, which are considered to capture visual quality better than MSE and SSIM.

Figure~\ref{fig:comparison1} depicts the outputs of several algorithms lying closest to the perception-distortion bound in the IFC graph in Fig. \ref{fig:noRefMethods1}. While the images are ordered from low to high distortion (according to IFC), their perceptual quality clearly improves from left to right.

Both FR and NR measures are commonly validated by calculating their correlation with human opinion scores, based on the assumption that both should be correlated with perceptual quality. However, as Fig.~\ref{fig:Ma_IFC_correlation} shows, while FR measures can be well-correlated with perceptual quality when distant from the unattainable region, this is clearly not the case when approaching the perception-distortion bound. In particular, all tested FR methods are inconsistent with human opinion scores which found the SRGAN to be superb in terms of perceptual quality \cite{ledig2016photo}, while NR methods successfully determine this. We conclude that image restoration algorithms should always be evaluated by a pair of NR and FR metrics, constituting a reliable, reproducible and simple method for comparison, which accounts for both perceptual quality and distortion. This evaluation method was demonstrated and validated by a human opinion study in the 2018 PIRM super-resolution challenge \cite{blau2018pirm}.

\begin{figure}
	\begin{center}
		\includegraphics[width=0.95\linewidth]{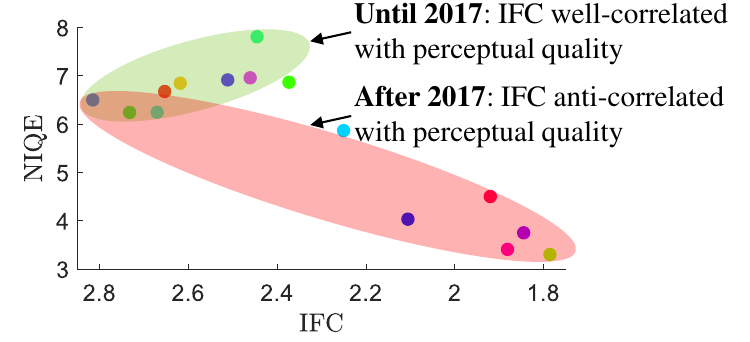}
	\end{center}
	\caption{\textbf{Correlation between distortion and perceptual quality.} In proximity of the perception-distortion bound, distortion and perceptual quality are \emph{anti-correlated}. However, correlation is possible at distance from the bound.}
	\label{fig:Ma_IFC_correlation}
\end{figure}

Up until 2016, SR algorithms occupied only the upper-left section of the perception-distortion plane. Nowadays, emerging techniques are exploring new regions in this plane. The SRGAN, ENet, Deng, Johnson \etal~and Mechrez \etal~methods are the first (to our knowledge) to populate the high perceptual quality region. In the near future we will most likely witness continued efforts to approach the perception-distortion bound, not only in the low-distortion region, but throughout the entire plane.

\section{Conclusion}
We proved and demonstrated the counter-intuitive phenomenon that distortion and perceptual quality are at odds with each other. Namely, the lower the distortion of an algorithm, the more its distribution must deviate from the statistics of natural scenes. We showed empirically that this tradeoff exists for many popular distortion measures, including those considered to be well-correlated with human perception. Therefore, any distortion measure alone, is unsuitable for assessing image restoration methods. Our novel methodology utilizes a pair of NR and FR metrics to place each algorithm on the perception-distortion plane, facilitating a more informative comparison of image restoration methods.

\vspace{0.2cm}
\noindent \textbf{Acknowledgements\ \ \ }
This research was supported by the Israel Science Foundation (grant no.~852/17), and by the Technion Ollendorff Minerva Center.

\bibliographystyle{IEEEtran}

\appendices

\section{Real-vs.-fake user studies and hypothesis testing}\label{ap:real-vs-fake}
We assume the setting where an observer is shown a real image (a draw from $p_X$) or an algorithm output (a draw from $p_{\hat{X}}$), with a prior probability of  $0.5$ each. The task is to identify which distribution the image was drawn from ($p_X$ or $p_{\hat{X}}$) with maximal probability of success. This is the setting of the Bayesian hypothesis testing problem, for which the maximum a-posteriori (MAP) decision rule minimizes the probability of error (see Section 1 in \cite{nielsen2013hypothesis}). When there are two possible hypotheses with equal probabilities (as in our setting), the relation between the probability of error and the total-variation distance between $p_X$ and $p_{\hat{X}}$ in~\eqref{eq:psuccess} can be easily derived (see Section 2 in \cite{nielsen2013hypothesis}).

\section{The MMSE and MAP examples of Sec.~\ref{sec:perceptionDistortion}}\label{ap:MMSE-MAP}
Sections \ref{subsec:MMSEMAP} and \ref{subsec:MMSEMAP2} exemplify that the MSE and the $0-1$ loss are not distribution preserving in the setting of estimating a discrete random variable (vector) $X$ from its noisy version $Y=X+N$, where $N\sim \mathcal{N}(0,\sigma^2 I)$ is independent of $X$. Since the conditional distribution of $Y$ given $X=x$ is $\mathcal{N}(x,\sigma^2 I)$, the MMSE estimator is given by
\begin{align}
\hat{x}_{\text{MMSE}} (y) &= \E[X \vert Y=y] \nonumber \\
&= \sum_x x p(x \vert y)  \nonumber \\
&= \sum_x x \frac{  p(y \vert x)p(x)}{\sum_{x'} p(y \vert x')p(x')} \nonumber \\
&= \sum_x x \frac{\exp(-\frac{1}{2\sigma^2}\|y-x\|^2)p(x)}{\sum_{x'} \exp(-\frac{1}{2\sigma^2}\|y-x'\|^2)p(x')},
\end{align}
and the MAP estimator is given by
\begin{align}
\hat{x}_{\text{MAP}}(y) &= \argmax_x p(x \vert y) \nonumber \\
&= \argmin_x -\log (p(y \vert x) p(x)) \nonumber \\
&= \argmin_x \frac{1}{2\sigma^2}\|y-x\|^2 - \log (p(x)).
\end{align}

In the example of Fig.~\ref{fig:MMSE_MAP}, $x$ is a $280 \times 280$ binary image comprising $28\times 28$ blocks chosen uniformly at random from a finite database. Since the noise $N$ is i.i.d., each $28\times 28$ block of $y$ can be denoised separately, both in the case of the MSE criterion and in the case of MAP. For each block, we have $p(x) = 1 / 59400$ for the non-blank images and $p(x) = 1 / 11$ for the blank image.

In the trinary example \eqref{eq:XdiscreteExample}, we calculate the distribution of the MMSE estimate (Fig.~\ref{fig:exampleMAP}) by
\begin{equation}
p_{\hat{X}_{\text{MMSE}}}(\hat{x}) = p_Y(\hat{x}_{\text{MMSE}}^{-1}(\hat{x})) \left\vert \frac{d}{d\hat{x}} \hat{x}_{\text{MMSE}}^{-1}(\hat{x}) \right\vert
\end{equation}
where the inverse of $\hat{x}_{\text{MMSE}}(y)$ (see \eqref{eq:xMMSE}) and its derivative are calculated numerically, and $p_Y(y) = \sum_x p(y \vert x) p(x)$ with $p(y \vert x) \sim \mathcal{N}(x,1)$ and $p(x)$ of \eqref{eq:XdiscreteExample}.

\section{Proof of Theorem \ref{thm:arbitraryDistortion}}\label{ap:distortionProofNonunique}
We will show that a stably distribution preserving optimal estimator is necessarily unique. At the same time, we will show that a non-invertible degradation implies that this optimal estimator is non-unique. Specifically, we use the following definitions.

\begin{definition}\label{def:nonInvertibleDeg}
	We say that a degradation is \emph{not invertible} if $p_{X|Y}(x|y)>0$ for all $(x,y)\in \mathcal{S}_x\times \mathcal{S}_y$, where $\mathcal{S}_x$ is a non-singleton set and $\mathcal{S}_y$ satisfies $\mathbb{P}(Y\in\mathcal{S}_y)>0$.
\end{definition}

\begin{definition}\label{def:notUniqueEstimator}
	We say that the optimal estimator is \emph{not unique} if there exist two estimators, $p_{\hat{X}_1|Y}$ and $p_{\hat{X}_2|Y}$ that minimize the mean distortion~\eqref{eq:AverageDistortion} and differ from one another in the sense that
	\begin{equation}
	    d_{\text{TV}}\left(p_{\hat{X}_1|Y}(\cdot|y),p_{\hat{X}_2|Y}(\cdot|y)\right)>0 \quad \forall y \in \mathcal{S}_y
	\end{equation}
	where $\mathcal{S}_y$ is a set that satisfies $\mathbb{P}(Y\in\mathcal{S}_y)>0$.
\end{definition}

The outline of the proof of Theorem~\ref{thm:arbitraryDistortion} will be as follows:
\begin{enumerate}
    \item In Lemma~\ref{lem:estimatorIsPosterior} we will show that if the distortion measure is stably distribution preserving, then the optimal estimator $\hat{X}^*$ is uniquely defined by $p_{\hat{X}^*|Y} = p_{X|Y}$.
    \item In Lemma~\ref{lem:estimatorNonUnique} we will show that if the estimator $\hat{X}^*$ defined by $p_{\hat{X}^*|Y} = p_{X|Y}$ is an optimal estimator and the degradation is non-invertible, then the optimal estimator is non-unique.
    \item This leads to a contradiction, proving that there does not exist a stably distribution preserving distortion metric if the degradation in non-invertible.
\end{enumerate}

\begin{lemma}\label{lem:estimatorIsPosterior}
	If the distortion measure $\Delta(\cdot,\cdot)$ is stably distribution preserving at $p_{X,Y}$, then the optimal estimator $\hat{X}^*$ that minimizes the mean distortion~\eqref{eq:AverageDistortion} is uniquely defined by $p_{\hat{X}^*|Y} = p_{X|Y}$.
\end{lemma}

\begin{proof}
We start by noting that the optimal estimator $p_{\hat{X}|Y}$ depends only on $p_{X|Y}$ and not on $p_Y$. Indeed, since $X$ and $\hat{X}$ are independent given $Y$, the mean distortion can be written as
\begin{align}\label{eq:meanDistortionExplicit}
\E[\Delta(X,\hat{X})]\! &=\!\iiint \!\!\Delta(x,\hat{x}) p_{X|Y}\!(x|y)p_{\hat{X}|Y}\!(\hat{x}|y)p_Y\!(y) dx d\hat{x} dy \nonumber\\
&= \int \left(\int f(\hat{x},y) p_{\hat{X}|Y}(\hat{x}|y) d\hat{x} \right) p_Y(y) dy,
\end{align}
where we defined
\begin{equation}\label{eq:definef}
f(\hat{x},y) = \int \Delta(x,\hat{x}) p_{X|Y}(x|y)dx.
\end{equation}
Therefore, the optimal $p_{\hat{X}|Y}$ is that which minimizes $\int f(\hat{x},y) p_{\hat{X}|Y}(\hat{x}|y) d\hat{x}$ for each $y$. Since $f(\hat{x},y)$ depends only on $p_{X|Y}$,
the optimal estimator depends only on $p_{X|Y}$.

Next, we observe that if a distortion measure is stably distribution preserving at $p_{X,Y}$, then there exists an $\alpha\in(0,1)$ such that the measure is distribution preserving at any perturbed joint distribution of the form $\tilde{p}_{X,Y} = p_{X|Y} \tilde{p}_Y $, where
\begin{equation}\label{eq:perturbedPy}
\tilde{p}_Y = \alpha p_Y + (1-\alpha) q
\end{equation}
and $q$ is any distribution. 
That is, we take a perturbed joint distribution having the same posterior $p_{X|Y}$ as $p_{X,Y}$, but a perturbed marginal. Indeed, taking $\alpha\geq 1-\varepsilon$, any such
$\tilde{p}_{X,Y}$ is in the TV $\varepsilon$-ball around $p_{X,Y}$, as
\begin{align}
    &d_{TV}(p_{X,Y}, \tilde{p}_{X,Y}) = \tfrac{1}{2} \iint |p_{X,Y}(x,y) - \tilde{p}_{X,Y}(x,y)| dx dy \nonumber \\ 
    &= \tfrac{1}{2} \iint | p_{X|Y}(x|y)p_Y(y) - p_{X|Y}(x|y)\tilde{p}_Y(y)| dx dy \nonumber \\
    &= \tfrac{1}{2} (1-\alpha) \iint | p_{X|Y}(x|y)p_Y(y) - p_{X|Y}(x|y)q(y)| dx dy \nonumber \\
    & \le 1-\alpha \nonumber\\
    & \le \varepsilon.
\end{align}

By our assumption that the optimal estimator is stably distribution preserving, it must satisfy $p_{\hat{X}^*} = p_X$ for any perturbation of $p_{X,Y}$ of the form \eqref{eq:perturbedPy}. Since the posterior has not changed, the optimal estimator  $p_{\hat{X}^*|Y}$ remains the same. Its marginal $\tilde{p}_{\hat{X}^*}$, however, is modified to
\begin{align}
&\tilde{p}_{\hat{X}^*}(x) = \int p_{\hat{X}^*|Y}(x|y) \tilde{p}_Y(y) dy \nonumber\\
&= \alpha \int p_{\hat{X}^*|Y}(x|y) p_Y(y) dy + (1-\alpha) \int p_{\hat{X}^*|Y}(x|y) q(y) dy \nonumber\\
&=\alpha p_X(x) + (1-\alpha) \int p_{\hat{X}^*|Y}(x|y) q(y) dy,
\end{align}
where we used the assumption that $p_{\hat{X}^*} = p_X$. Similarly, the distribution of $X$ has changed to
\begin{align}
&\tilde{p}_{X}(x) = \int p_{X|Y}(x|y) \tilde{p}_Y(y) dy \nonumber\\
&= \alpha \int p_{X|Y}(x|y) p_Y(y) dy  + (1-\alpha) \int p_{X|Y}(x|y) q(y) dy\nonumber\\
&= \alpha p_X(x) + (1-\alpha) \int p_{X|Y}(x|y) q(y) dy.
\end{align}
Thus, equality between $\tilde{p}_{\hat{X}^*}$ and $\tilde{p}_X$ is kept only if
\begin{equation}
\int p_{\hat{X}^*|Y}(x|y) q(y) dy = \int p_{X|Y}(x|y) q(y) dy.
\end{equation}
This equality can hold for \emph{every} perturbation $q$ only if $p_{\hat{X}^*|Y} = p_{X|Y}$, completing the proof.

Notice that this also proves that the optimal estimator is unique (under the stably distribution preserving assumption), as we demonstrated that only $p_{\hat{X}^*|Y} = p_{X|Y}$ minimizes the mean distortion.
\end{proof}

\begin{lemma}\label{lem:estimatorNonUnique}
    If the degradation is non-invertible, and the estimator $\hat{X}^*$ defined by $p_{\hat{X}^*|Y} = p_{X|Y}$ is an optimal estimator, then the optimal estimator is non-unique.
\end{lemma}

\begin{proof}
    Since the degradation is non-invertible, $p_{X|Y}(x|y)>0$ for all $(x,y)\in \mathcal{S}_x\times \mathcal{S}_y$, where $\mathcal{S}_x$ is a non-singleton set and $\mathcal{S}_y$ is a set that satisfies $\mathbb{P}(Y\in\mathcal{S}_y)>0$ (Definition~\ref{def:nonInvertibleDeg}). As $p_{\hat{X}^*|Y} = p_{X|Y}$, we also have that $p_{\hat{X}^*|Y}(x|y)>0$ for all $(x,y)\in \mathcal{S}_x\times \mathcal{S}_y$.
    
    Now, since $\hat{X}^*$ is an optimal estimator, $p_{\hat{X}^*|Y}$ must minimize $\int f(\hat{x},y) p_{\hat{X}|Y}(\hat{x}|y) d\hat{x}$ for each $y$ (see proof of Lemma~\ref{lem:estimatorIsPosterior}). This means that for any $y$, the conditional $p_{\hat{X}^*|Y}(\hat{x}|y)$ must assign positive probability only to $\hat{x}$ in the set of minima $\mathcal{S}_{\min}(y)=\argmin_{\hat{x}}f(\hat{x},y)$. We conclude 
    that $\mathcal{S}_{x}\subseteq \mathcal{S}_{\min}(y)$ for every $y \in \mathcal{S}_y$. This implies that any other estimator that assigns zero probability to  $\hat{x}\notin\mathcal{S}_x$ for every $y\in\mathcal{S}_y$, is also optimal.
    
    Let $\mathcal{S}_x^1,\mathcal{S}_x^2$ be non-empty disjoint sets such that $\mathcal{S}_x^1 \cup \mathcal{S}_x^2 = \mathcal{S}_x$. Now, define two estimators, such that $p_{\hat{X}_1|Y}(\hat{x}|y) > 0$ only for $\hat{x} \in \mathcal{S}_x^1$, and $p_{\hat{X}_2|Y}(\hat{x}|y) > 0$ only for $\hat{x} \in \mathcal{S}_x^2$, for every $y \in \mathcal{S}_y$. Both are optimal estimators (as they only assign positive probability to $\hat{x} \in \mathcal{S}_x$). Yet, these two estimators have conditional distributions with disjoint supports for every $y$, and thus $d_{\text{TV}}(p_{\hat{X}_1|Y}(\cdot|y),p_{\hat{X}_2|Y}(\cdot|y))>0 \quad \forall y \in \mathcal{S}_y$. Therefore, by Definition \ref{def:notUniqueEstimator}, the optimal estimator is non-unique.
\end{proof}

Now, let us assume to the contrary that $p_{X,Y}$ defines a non-invertible degradation, and that the distortion function $\Delta(\cdot,\cdot)$ is stably distribution preserving at $p_{X,Y}$. By Lemma \ref{lem:estimatorIsPosterior}, the optimal estimator $\hat{X}^*$ is uniquely defined by $p_{\hat{X}^*|Y} = p_{X|Y}$. But now according to Lemma \ref{lem:estimatorNonUnique}, since the degradation is non-invertible, the optimal estimator is non-unique, leading to a contradiction.

\section{Derivation of Example \ref{ex:scalarGaussian}}\label{ap:scalarGaussian}
Since $\hat{X} = aY = a(X+N)$, it is a zero-mean Gaussian random variable. Now, the Kullback-Leibler distance between two zero-mean normal distributions is given by
\begin{equation}
d_{\text{KL}}(p_X\|p_{\hat{X}}) = \ln \left( \frac{\sigma_{\hat{X}}}{\sigma_X} \right) + \frac{\sigma_X^2}{2\sigma_{\hat{X}}^2} - \frac{1}{2},
\end{equation}
and the MSE between $X$ and $\hat{X}$ is given by
\begin{equation}
\text{MSE}(X,\hat{X}) = E[(X-\hat{X})^2]=\sigma_X^2-2\sigma_{X\hat{X}}+\sigma_{\hat{X}}^2.
\end{equation}
Substituting $\hat{X}=aY$ and $\sigma_X^2=1$, we obtain that $\sigma_{\hat{X}}=|a|\sqrt{1 + \sigma_N^2}$ and $\sigma_{X\hat{X}}=a$, so that
\begin{align}
d_{\text{KL}}(a) &= \ln \left( |a|\sqrt{1 + \sigma_N^2} \right) + \frac{1}{2a^2(1 + \sigma_N^2)} - \frac{1}{2}, \label{eq:dKL_scalarGauss}\\
\text{MSE}(a) &= 1+a^2(1 + \sigma_N^2)-2a,\label{eq:MSE_scalarGauss}
\end{align}
and
\begin{equation}\label{eq:fAlphaScalarGaussian}
P(D) = \min_{a} d_{\text{KL}}(a)
\quad\text{s.t.}\quad  \text{MSE}(a) \le D.
\end{equation}
Notice that $d_{\text{KL}}$ is symmetric, and $\text{MSE}(|a|) \le \text{MSE}(a)$ (see Fig.~\ref{fig:dKL_MSE}). Thus, for any negative $a$, there always exists a positive $a$ with which $d_{\text{KL}}$ is the same and the MSE is not larger. Therefore, without loss of generality, we focus on the range $a\ge0$.

\begin{figure*}
	\begin{center}
		\includegraphics[width=\linewidth]{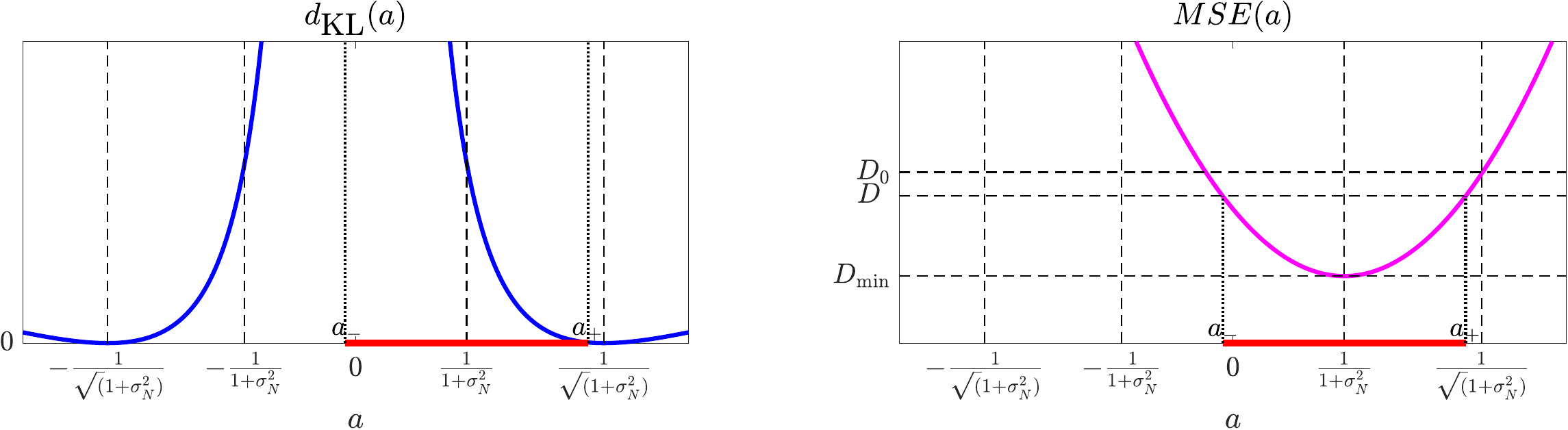}
	\end{center}
	\caption{Plots of \eqref{eq:dKL_scalarGauss} and \eqref{eq:MSE_scalarGauss}. $D$ defines the range $(a_-,a_+)$ of $a$ values complying with the MSE constraint (marked in red). The objective $d_{\text{KL}}$ is minimized over this range of possible $a$ values.}
	\label{fig:dKL_MSE}
\end{figure*}

For $D < D_{\min} = \frac{\sigma_N^2}{1+\sigma_N^2}$ the constraint set of $\text{MSE}(a)<D$ is empty, and there is no solution to \eqref{eq:fAlphaScalarGaussian}. For $D \ge D_{\min}$, the constraint is satisfied for $a_- \le a \le a_+$, where
\begin{equation}\label{eq:constraintSol}
a_{\pm}(D) = \frac{1}{(1 + \sigma_N^2)}\left(1\pm\sqrt{D(1+\sigma_N^2)-\sigma_N^2}\right).
\end{equation}
For $D=D_{\min}$, the optimal (and only possible) $a$ is
\begin{equation}
a = a_+(D_{\min}) = a_-(D_{\min}) =  \frac{1}{(1 + \sigma_N^2)}.
\end{equation}
For $D > D_{\min}$, $a_+$ monotonically increases with $D$, broadening the constraint set. The objective $d_{\text{KL}}(a)$ monotonically decreases with $a$ in the range $a \in (0,1/\sqrt{(1+\sigma_N^2)})$ (see Fig.~\ref{fig:dKL_MSE} and the mathematical justification below). Thus, for $D_{\min} < D \le D_0$, the optimal $a$ is always the largest possible $a$, which is $a=a_+(D)$, where $D_0$ is defined by $a_+(D_0) = 1/\sqrt{(1+\sigma_N^2)}$ (see Fig.~\ref{fig:dKL_MSE}).
For $D>D_0$, the optimal $a$ is $a=1/\sqrt{(1+\sigma_N^2)}$, which achieves the global minimum $d_{\text{KL}}(a)=0$. The closed form solution is therefore given by
\begin{equation}
P(D) =\begin{cases}
\begin{aligned}
&d_{\text{KL}}(a_+(D)) \quad \quad &D_{\min} \le D < D_0\\
&0 &D_0 \le D
\end{aligned}
\end{cases}
\end{equation}

To justify the monotonicity of $d_{\text{KL}}(a)$ in the range $a \in (0,1/\sqrt{(1+\sigma_N^2)})$, notice that for $a > 0$,
\begin{equation}
\frac{d}{da} d_{\text{KL}}(a) = \frac{1}{a} - \frac{1}{(1 + \sigma_N^2)} \frac{1}{a^3},
\end{equation}
which is negative for $a \in (0,1/\sqrt{(1+\sigma_N^2)})$.

\section{Proof of Theorem \ref{lem:convexity}}\label{ap:convexityProof}

\begin{figure}
	\begin{center}
		\includegraphics[width=\linewidth]{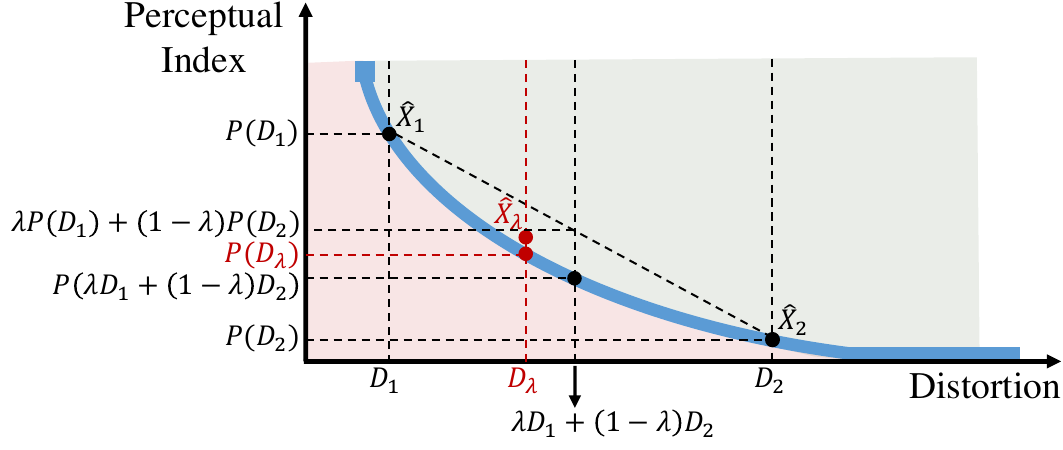}
	\end{center}
	\caption{Illustration of the proof of Theorem \ref{lem:convexity}.}
	\label{fig:theoremConvexity}
\end{figure}

The proof of Theorem \ref{lem:convexity} follows closely that of the rate-distortion theorem from information theory \cite{cover2012elements}. The value $P(D)$ is the minimal distance $d(p_X,p_{\hat{X}})$ over a constraint set whose size does not decrease with~$D$. This implies that the function $P(D)$ is non-increasing in $D$. Now, to prove the convexity of $P(D)$, we will show that
\begin{equation}\label{eq:proofObjective}
\lambda P(D_{1})+(1-\lambda)P(D_2) \ge  P(\lambda D_{1}+(1-\lambda)D_{2}),
\end{equation}
for all $\lambda \in [0,1]$ (see Fig.~\ref{fig:theoremConvexity}). First, by definition, the left hand side of~\eqref{eq:proofObjective} can be written as
\begin{equation}\label{eq:pr1}
\lambda d(p_{X}, p_{\hat{X}_{1}}) + (1-\lambda)d(p_{X}, p_{\hat{X}_{2}}),
\end{equation}
where $\hat{X}_1$ and $\hat{X}_2$ are the estimators defined by
\begin{align}\label{eq:defX1}
p_{\hat{X}_1 \vert Y} &= \argmin_{p_{\hat{X} \vert Y}}\limits d(p_X,p_{\hat{X}})\;\;
\text{s.t.}\;\;  \EE{\Delta(X,\hat{X})} \le D_1,  \\
p_{\hat{X}_2 \vert Y} &= \argmin_{p_{\hat{X} \vert Y}}\limits d(p_X,p_{\hat{X}}) \;\;
\text{s.t.}\;\;  \EE{\Delta(X,\hat{X})} \le D_2.
\end{align}
Since $d(\cdot,\cdot)$ is convex in its second argument,
\begin{equation}\label{eq:pr2}
\lambda d(p_{X}, p_{\hat{X}_{1}}) + (1-\lambda)d(p_{X}, p_{\hat{X}_{2}}) \ge d(p_{X}, p_{\hat{X}_{\lambda}}),
\end{equation}
where $\hat{X}_{\lambda}$ is defined by
\begin{equation}\label{eq:define_XhatLambda}
p_{\hat{X}_{\lambda}\vert Y}=\lambda p_{\hat{X}_1\vert Y}+\left(1-\lambda \right)p_{\hat{X}_2\vert Y}.
\end{equation}
Denoting $D_\lambda = \E[\Delta(X,\hat{X}_\lambda)]$, we have that
\begin{align}\label{eq:pr3}
d(p_{X}, p_{\hat{X}_{\lambda}}) &\ge \min_{p_{\hat{X} \vert Y}} \left\{ d(p_X,p_{\hat{X}}) \, : \, \E[\Delta(X,\hat{X})] \le D_\lambda \right\} \nonumber \\
&= P(D_{\lambda}),
\end{align}
because $\hat{X}_{\lambda}$ is in the constraint set. Below, we show that
\begin{equation}\label{eq:pr5}
D_{\lambda} \le \lambda D_1 + (1-\lambda)D_2.
\end{equation}
Therefore, since $P(D)$ is non-increasing in $D$, we have that
\begin{equation}\label{eq:pr4}
P(D_{\lambda}) \ge P(\lambda D_{1}+(1-\lambda)D_2).
\end{equation}
Combining \eqref{eq:pr1}, \eqref{eq:pr2}, \eqref{eq:pr3} and \eqref{eq:pr4} proves \eqref{eq:proofObjective}, thus demonstrating that $P(D)$ is convex.

To justify \eqref{eq:pr5}, note that
\begin{align}
D_\lambda &= \E\left[\Delta(X,\hat{X_\lambda})\right] \nonumber \\
&= \E \left[ \E\left[\Delta(X,\hat{X_\lambda}) \vert Y \right]  \right] \nonumber \\
&= \E \left[ \lambda \E\left[\Delta(X,\hat{X_1}) \vert Y \right] + (1-\lambda) \E\left[ \Delta(X,\hat{X_2}) \vert Y \right]  \right] \nonumber \\
&= \lambda\EE{\Delta(X,\hat{X}_1)}+\left(1-\lambda\right)\EE{\Delta(X,\hat{X}_2)} \nonumber\\
&\le \lambda D_1 + (1-\lambda)D_2,
\end{align}
where the second and fourth transitions are according to the law of total expectation and the third transition is justified by
\begin{align}
p(x,\hat{x}_\lambda | y) &= p(\hat{x}_\lambda | x,y) p(x | y) \nonumber\\
&= p(\hat{x}_\lambda | y) p(x | y) \nonumber\\
&= (\lambda p(\hat{x}_1 | y) + (1-\lambda )p(\hat{x}_2 | y)) p(x | y) \nonumber \\
&= \lambda p(\hat{x}_1 | y) p(x | y) + (1-\lambda )p(\hat{x}_2 | y)) p(x | y) \nonumber\\
&= \lambda p(x,\hat{x}_1 | y) + (1-\lambda )p(x, \hat{x}_2 | y)).
\end{align}
Here we used \eqref{eq:define_XhatLambda} and the fact that $X$ and $\hat{X}_\lambda$ are independent given $Y$, and similarly for the pairs $(X,\hat{X}_1)$ and $(X,\hat{X}_2)$.

\section{Proof of Theorem \ref{thm:bound}}\label{ap:boundProof}
The estimator $\hat{X}$ of \eqref{eq:xpost} attains perfect perceptual quality since
\begin{align}\label{eq:sameDist}
p_{\hat{X}}(x) &= \int p_{\hat{X}|Y}(x|y) p_Y(y) dy \nonumber \\
&= \int p_{X|Y}(x|y) p_Y(y) dy \nonumber\\
&= p_X(x).
\end{align}
Furthermore, note that
\begin{align}\label{eq:XtEhatX}
\E[ X^T\hat{X}] &= \E[ \E[X^T\hat{X}|Y] ] \nonumber\\
&= \E[ \E[X|Y]^T\E[\hat{X} |Y] ] \nonumber \\
&= \E[\| \E[X|Y]\|^2],
\end{align}
and
\begin{align}\label{eq:EhatX}
\E[\|\hat{X}\|^2] = \E[ \E[\|\hat{X}\|^2|Y] = \E[ \E[\|X\|^2|Y] = \E[\|X\|^2],
\end{align}
where we used the law of total expectation and the fact that given $Y$, $X$ and $\hat{X}$ are independent and identically distributed. The MSE of $\hat{X}$ is therefore
\begin{align}
\E[\| X-\hat{X}\|^2] &= \E[\|X\|^2] -2 \E[ X^T\hat{X}] + \E[\|\hat{X}\|^2] \nonumber \\
&= 2(\E[\|X\|^2] - \E[ \|\E[X|Y]\|^2]) \nonumber \\
&= 2\,\E[\| X - \E[X|Y] \|^2] \nonumber \\
&= 2\,\E[\| X - \hat{X}_{\text{MMSE}} \|^2],
\end{align}
where the second equality is due to \eqref{eq:XtEhatX} and \eqref{eq:EhatX}, and the third equality is due to the orthogonality principle. We thus established that $\hat{X}$ is a distribution preserving estimator whose MSE is precisely twice the MSE of the MMSE estimator. This implies that
\begin{equation}
D_{\max} \le \E[\| X-\hat{X} \|^2]  =  2 D_{\min},
\end{equation}
completing the proof.

\section{WGAN architecture and training details (Sec. \ref{sec:WGAN})}\label{ap:WGANdetails}
\begin{table*}
	\centering
	\caption{Generator and discriminator architecture. FC is a fully-connected layer, BN is a batch-norm layer, and l-ReLU is a leaky-ReLU layer.}\label{tab:WGAN}
	\begin{tabular}[t]{|c|c|}
		\hline
		\multicolumn{2}{c}{Discriminator} \\
		\hline
		Size & Layer\\
		\hline \hline
		$28 \times 28 \times 1$ & Input  \\
		\hline
		$12 \times 12 \times 32$ & Conv (stride=2), l-ReLU (slope=$0.2$) \\
		\hline
		$4 \times 4 \times 64$ & Conv (stride=2), l-ReLU (slope=$0.2$) \\
		\hline
		$1024$ & Flatten\\
		\hline
		$1$ & FC\\
		\hline
		$1$ & Output\\
		\hline
	\end{tabular}
	\quad\quad\quad
	\begin{tabular}[t]{|c|c|}
		\hline
		\multicolumn{2}{c}{Generator} \\
		\hline
		Size & Layer\\
		\hline \hline
		$28 \times 28 \times 1$ & Input  \\
		\hline
		$784$ & Flatten \\
		\hline
		$4 \times 4 \times 128$ & FC, unflatten, BN, ReLU \\
		\hline
		$7 \times 7 \times 64$ & transposed-Conv (stride=$2$), BN, ReLU \\
		\hline
		$14 \times 14 \times 32$ & transposed-Conv (stride=$2$), BN, ReLU \\
		\hline
		$28 \times 28 \times 1$ & transposed-Conv (stride=$2$), sigmoid\\
		\hline
		$28 \times 28 \times 1$ & Output\\
		\hline
	\end{tabular}	
\end{table*}

The architecture of the WGAN trained for denoising the MNIST images is detailed in Table \ref{tab:WGAN}. The training algorithm and adversarial losses are as proposed in \cite{gulrajani2017improved}. The generator loss was modified to include a content loss term, \ie $\ell_{\text{gen}} = \ell_{\text{MSE}} + \lambda \, \ell_{\text{adv}}$, where $\ell_{\text{MSE}}$ is the standard MSE loss. For each $\lambda$ the WGAN was trained for 35 epochs, with a batch size of 64 images. The ADAM optimizer \cite{kingma2017adam} was used, with $\beta_1=0.5, \beta_2=0.9$. The generator\slash discriminator initial learning rate is $10^{-3}\slash 10^{-4}$ respectively, where learning rate of both decreases by half every 10 epochs. The filter size of the discriminator convolutional layers is $5\times 5$, and these are performed without padding. The filter size in the generator transposed-convolutional layers is $5\times 5 \slash 4\times 4$, and these are performed with $2\slash 1$ pixel padding for the first\slash second and third transposed-convolutional layers, respectively. The stride of each convolutional layer and the slope for the leaky-ReLU layers appear in Table \ref{tab:WGAN}. Note that the perception-distortion curve in Fig.~\ref{fig:WGAN} is generated by training on single digit images, which in general may deviate from the perception-distortion curve of  whole images containing i.i.d. sub-blocks of digits.

\section{Super-resolution evaluation details (Sec. \ref{sec:practicalMethod}) and additional comparisons}\label{sec:SR_details}
The no-reference (NR) and full-reference (FR) methods BRISQUE, BLIINDS-II, NIQE, SSIM, MS-SSIM, IFC and VIF were obtained from the LIVE laboratory website\footnote{\url{http://live.ece.utexas.edu/research/Quality/index.htm}}, the NR method of Ma \etal was obtained from the project webpage\footnote{\url{https://github.com/chaoma99/sr-metric}}, and the pretrained VGG-19 network was obtained through the PyTorch torchvision package\footnote{\url{http://pytorch.org/docs/master/torchvision/index.html}}. The low-resolution images were obtained by factor 4 downsampling with a bicubic kernel. The super-resolution results on the BSD100 dataset of the SRGAN and SRResNet variants were obtained online\footnote{\url{https://twitter.box.com/s/lcue6vlrd01ljkdtdkhmfvk7vtjhetog}}, and the results of EDSR, Deng, Johnson \etal~and Mechrez \etal~were kindly provided by the authors. The algorithms for testing the other SR methods were obtained online: A+\footnote{\url{http://www.vision.ee.ethz.ch/~timofter/ACCV2014_ID820_SUPPLEMENTARY/}}, SRCNN\footnote{\url{http://mmlab.ie.cuhk.edu.hk/projects/SRCNN.html}}, SelfEx\footnote{\url{https://github.com/jbhuang0604/SelfExSR}},  VDSR\footnote{\url{http://cv.snu.ac.kr/research/VDSR/}}, LapSRN\footnote{\url{https://github.com/phoenix104104/LapSRN}}, Bae \etal\footnote{\url{https://github.com/iorism/CNN}} and ENet\footnote{\url{https://webdav.tue.mpg.de/pixel/enhancenet/}}. All NR and FR metrics and all SR algorithms were used with the default parameters and models. In the paper, we reported comparisons on the y-channel (except for the $\text{VGG}_{2,2}$ measure). In the supplementary material, we report results with additional NR metrics on the y-channel, as well as results on color images. When comparing color images, for SR algorithms which treat the y-channel alone, the Cb and Cr channels are upsampled by bicubic interpolation.

The general pattern appearing in Fig.~\ref{fig:noRefMethods1} will appear for any NR method which accurately predicts the perceptual quality of images. We show here three additional popular NR methods: BRISQUE \cite{mittal2012no}, BLIINDS-II \cite{saad2012blind} and the recent measure by Ma \etal \cite{ma2017learning} in Figs.~\ref{fig:noRefMethods3}, \ref{fig:noRefMethods4}, \ref{fig:noRefMethods5}, where the same conclusions as for NIQE \cite{mittal2013making} (see Sec. \ref{sec:practicalMethod}) are apparent. The same pattern appears for RGB images as well, as shown in Figs.~\ref{fig:noRefMethods6}, \ref{fig:noRefMethods7}. Note that the perceptual quality of Johnson \etal and SRResNet-VGG$_{2,2}$ is inconsistent between NR metrics, likely due to varying sensitivity to the cross-hatch pattern artifacts which are present in these method's outputs. For this reason, Johnson \etal does not appear in the NIQE plots, as its NIQE score is $13.55$ (far off the plots).

\begin{figure*}
	\begin{center}
		\includegraphics[width=\linewidth]{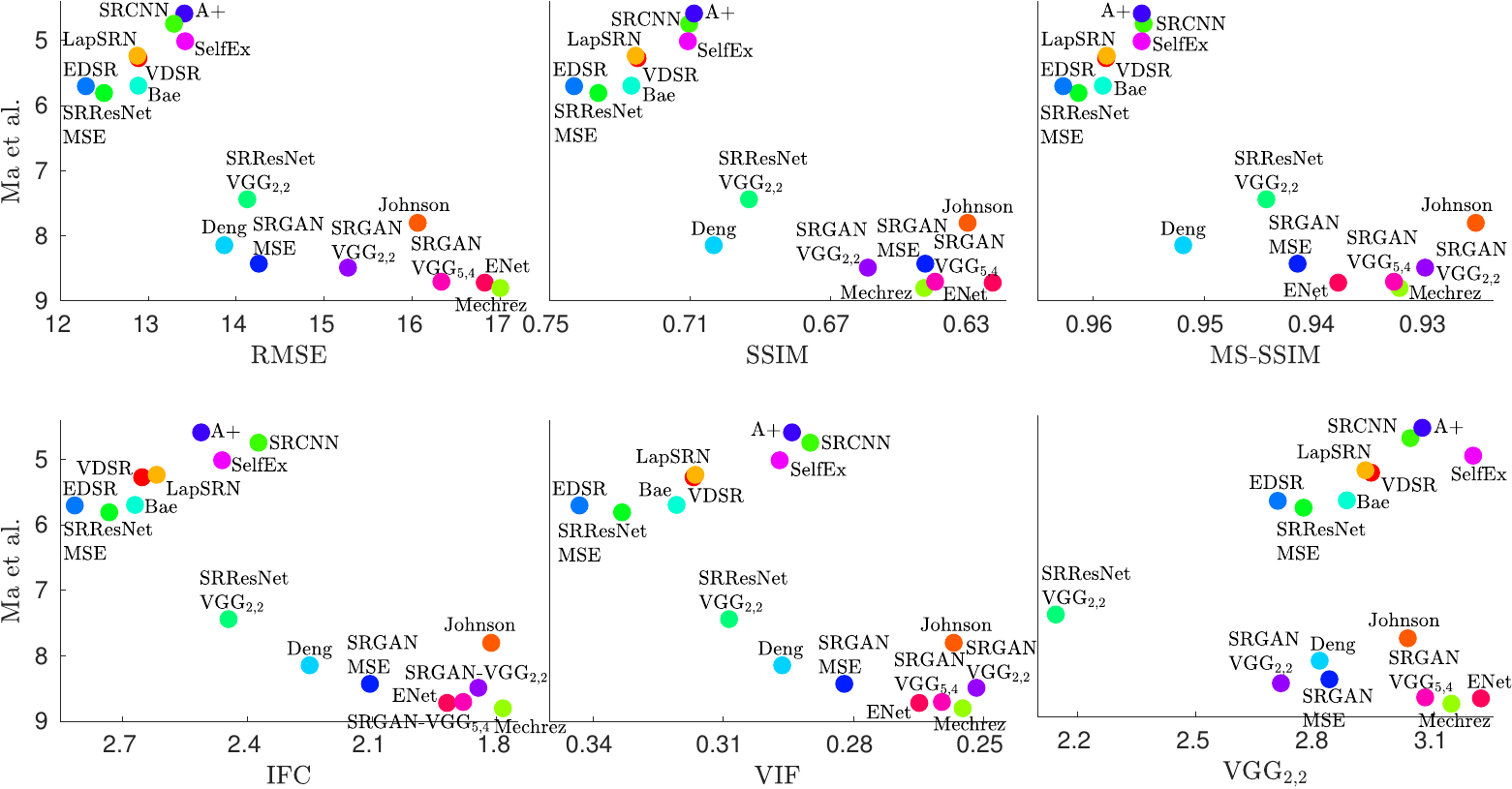}
	\end{center}
	\caption{Plot of $15$ algorithms on the perception-distortion plane, where perception is measured by the NR metric by Ma \etal \cite{ma2017learning}, and distortion is measured by the common full-reference metrics RMSE, SSIM, MS-SSIM, IFC, VIF and VGG$_{2,2}$. All metrics were calculated on the \textbf{y-channel} alone.}
	\label{fig:noRefMethods3}
\end{figure*}

\begin{figure*}
	\begin{center}
		\includegraphics[width=\linewidth]{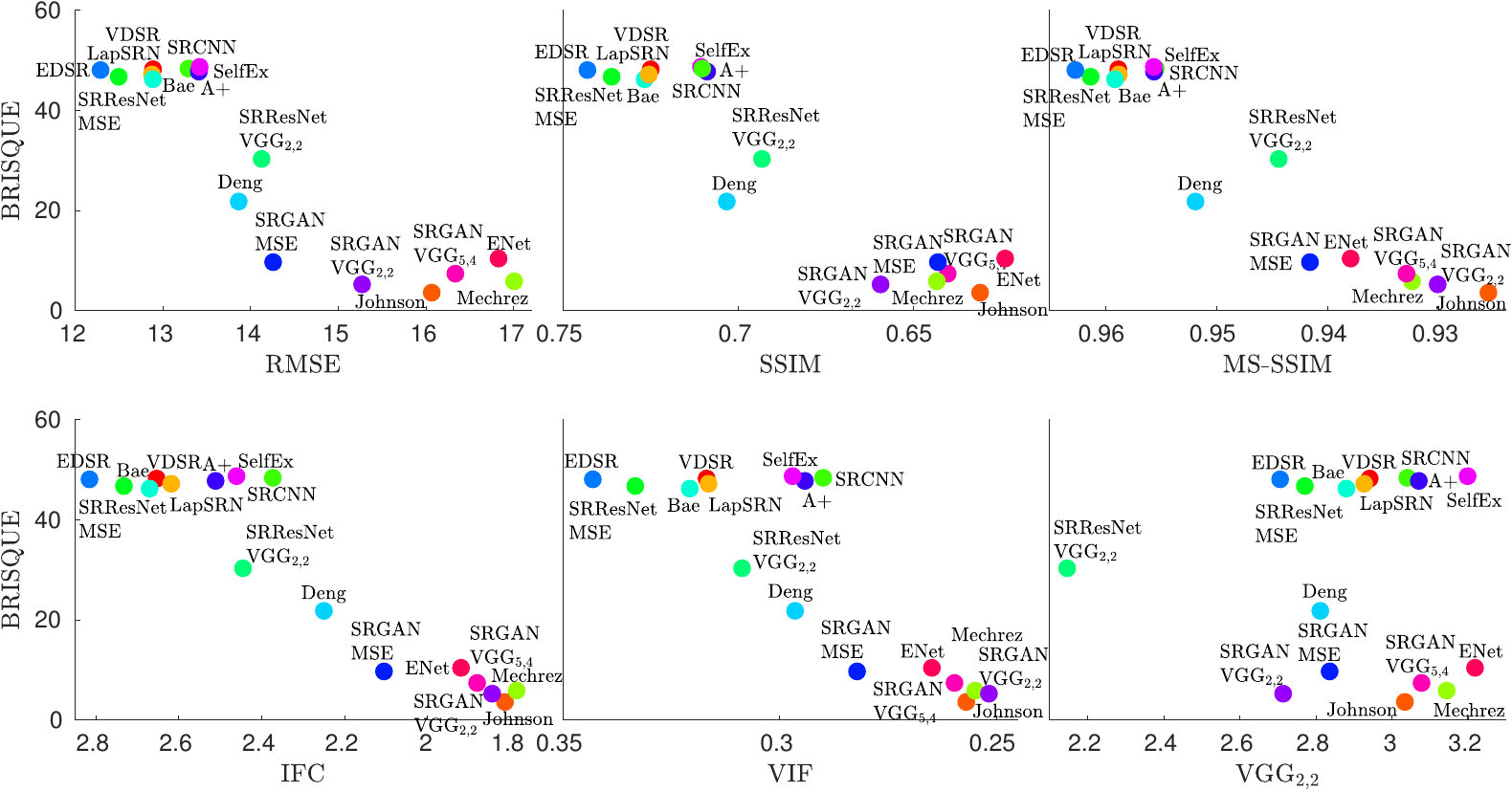}
	\end{center}
	\caption{Plot of $16$ algorithms on the perception-distortion plane, where perception is measured by the NR metric BRISQUE, and distortion is measured by the common full-reference metrics RMSE, SSIM, MS-SSIM, IFC, VIF and VGG$_{2,2}$. All metrics were calculated on the \textbf{y-channel} alone.}
	\label{fig:noRefMethods4}
\end{figure*}

\begin{figure*}
	\begin{center}
		\includegraphics[width=\linewidth]{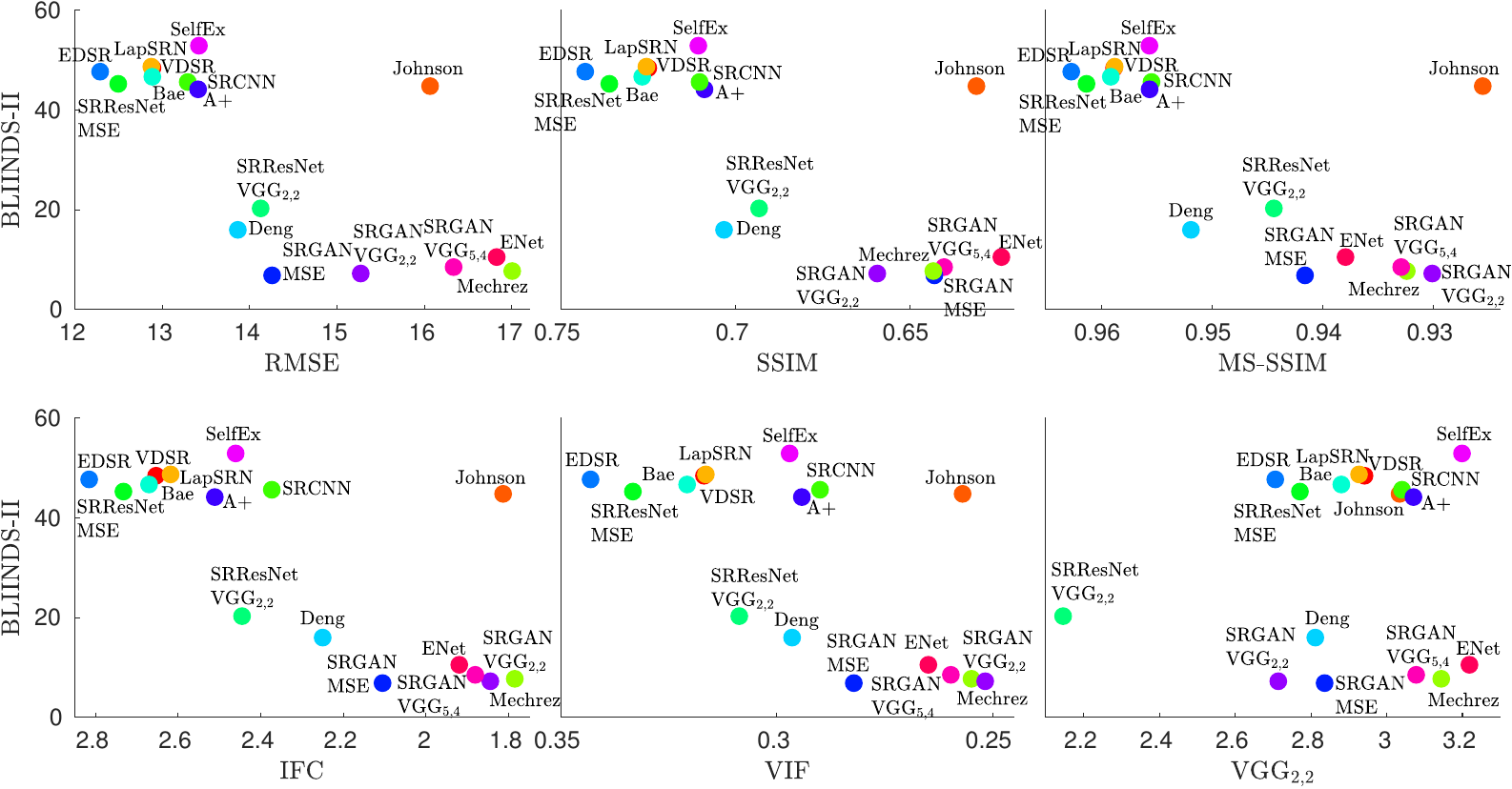}
	\end{center}
	\caption{Plot of $16$ algorithms on the perception-distortion plane, where perception is measured by the NR metric BLIINDS-II, and distortion is measured by the common full-reference metrics RMSE, SSIM, MS-SSIM, IFC, VIF and VGG$_{2,2}$. All metrics were calculated on the \textbf{y-channel} alone.}
	\label{fig:noRefMethods5}
\end{figure*}

\begin{figure*}
	\begin{center}
		\includegraphics[width=\linewidth]{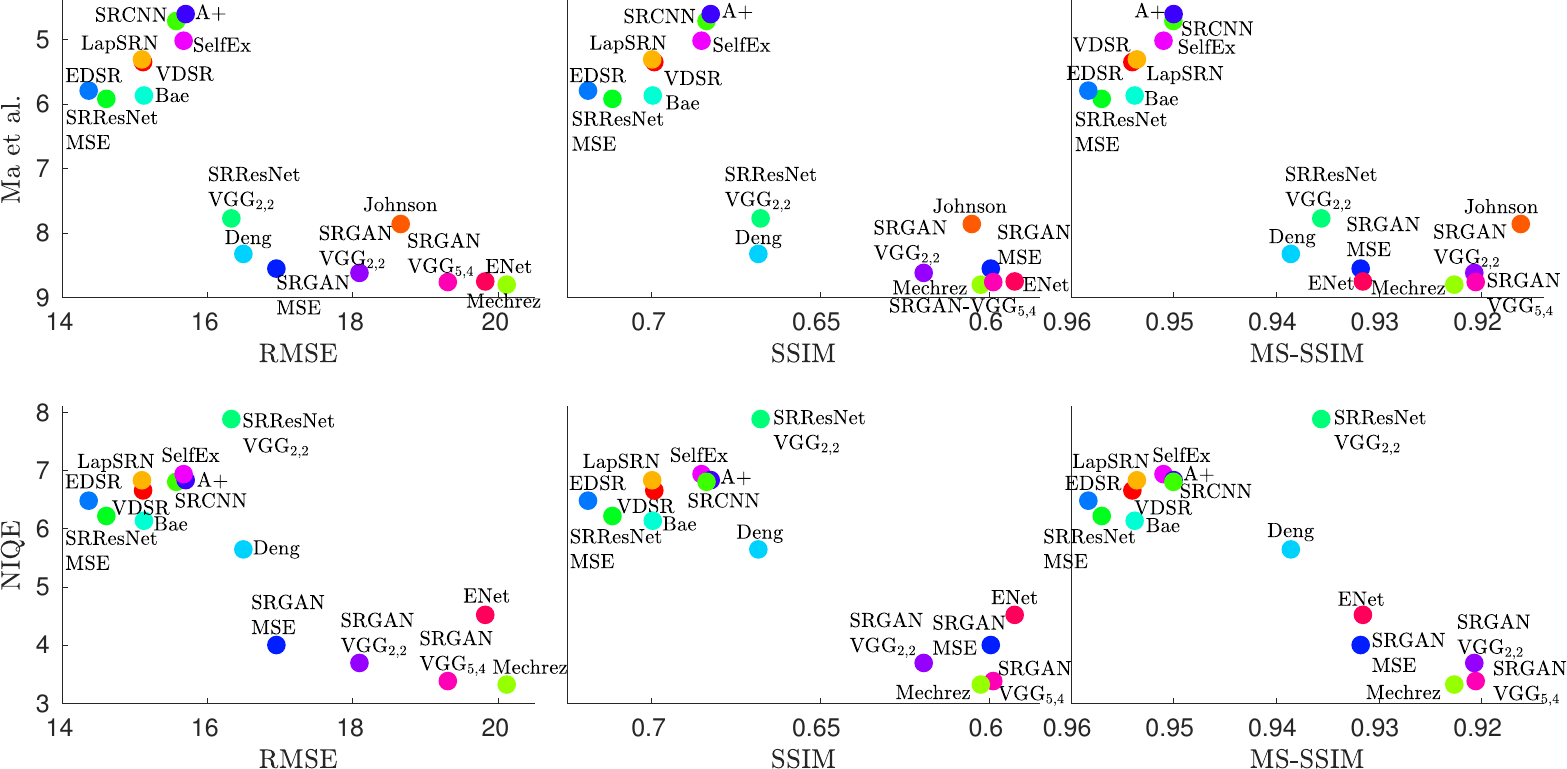}
	\end{center}
	\caption{Plot of $16$ algorithms on the perception-distortion plane. Perception is measured by the the NR metrics of Ma \etal~and NIQE, and distortion is measured by the common full-reference metrics RMSE, SSIM and MS-SSIM. All metrics were calculated on \textbf{three channel RGB} images.}
	\label{fig:noRefMethods6}
\end{figure*}

\begin{figure*}
	\begin{center}
		\includegraphics[width=\linewidth]{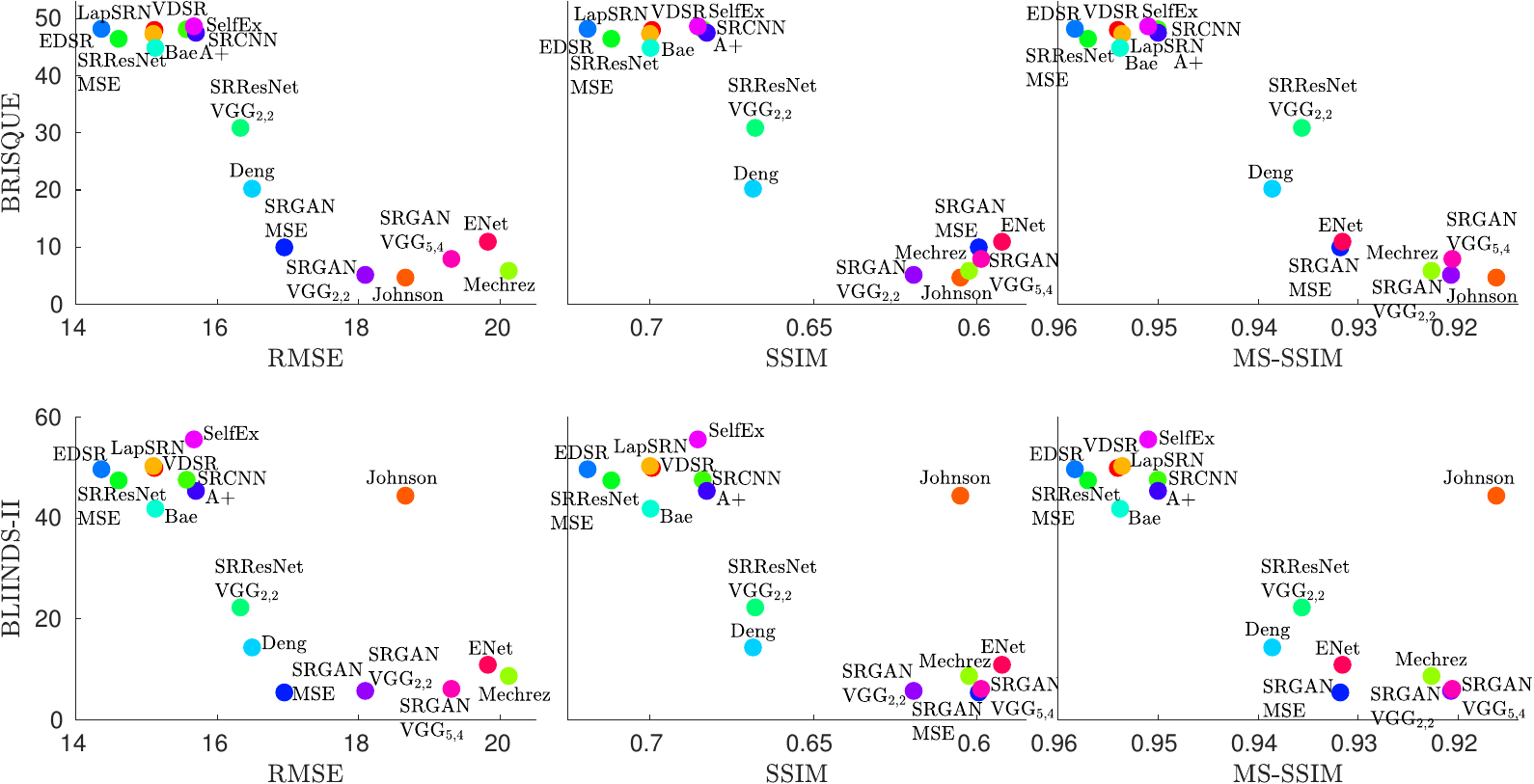}
	\end{center}
	\caption{Plot of $16$ algorithms on the perception-distortion plane. Perception is measured by the the NR metrics BRISQUE and BLIINDS-II, and distortion is measured by the common full-reference metrics RMSE, SSIM and MS-SSIM. All metrics were calculated on \textbf{three channel RGB} images.}
	\label{fig:noRefMethods7}
\end{figure*}

\end{document}